\documentclass[american,a4paper, 10pt, conference]{article}
\usepackage[T1]{fontenc}
\usepackage[latin9]{inputenc}
\synctex=-1
\usepackage{amsthm}
\usepackage{amsmath}
\usepackage{amssymb}
\usepackage{graphicx}

\makeatletter

\providecommand{\tabularnewline}{\\}

 \usepackage[ruled,vlined]{algorithm2e}
\theoremstyle{plain}
\newtheorem{thm}{\protect\theoremname}
  \theoremstyle{definition}
  \newtheorem{defn}[thm]{\protect\definitionname}
  \theoremstyle{plain}
  \newtheorem{prop}[thm]{\protect\propositionname}
  \theoremstyle{plain}
  \newtheorem{lem}[thm]{\protect\lemmaname}
  \theoremstyle{plain}
  \newtheorem*{thm*}{\protect\theoremname}

\pdfminorversion=4
\usepackage{aaai}

\usepackage{times}
\usepackage{helvet}
\usepackage{courier}
\usepackage{xcolor}
\usepackage{bm}


\@ifundefined{showcaptionsetup}{}{%
 \PassOptionsToPackage{caption=false}{subfig}}
\usepackage{subfig}
\makeatother

  \providecommand{\definitionname}{Definition}
  \providecommand{\lemmaname}{Lemma}
  \providecommand{\propositionname}{Proposition}
  \providecommand{\theoremname}{Theorem}
\providecommand{\theoremname}{Theorem}

\begin{document}

\title{Complete Decentralized Method for On-Line Multi-Robot Trajectory
Planning in Valid Infrastructures}

\author{Michal \v{C}áp \and Ji\v{r}í Vok\v{r}ínek\\
ATG, Dept. of Computer Science,\\
FEL, CTU in Prague, Czech Republic\\
\And Alexander Kleiner\\
iRobot Inc.,\\
Pasadena, CA, USA}
\maketitle
\begin{abstract}
We consider a system consisting of multiple mobile robots in which
the user can at any time issue relocation tasks ordering one of the
robots to move from its current location to a given destination location.
In this paper, we deal with the problem of finding a trajectory for
each such relocation task that avoids collisions with other robots.
The chosen robot plans its trajectory so as to avoid collision with
other robots executing tasks that were issued earlier. We prove that
if all possible destinations of the relocation tasks satisfy so-called
valid infrastructure property, then this mechanism is guaranteed to
always succeed and provide a trajectory for the robot that reaches
the destination without colliding with any other robot. The time-complexity
of the approach on a fixed space-time discretization is only quadratic
in the number of robots. We demonstrate the applicability of the presented
method on several real-world maps and compare its performance against
a popular reactive approach that attempts to solve the collisions
locally. Besides being dead-lock free, the presented approach generates
trajectories that are significantly faster (up to 48\% improvement)
than the trajectories resulting from local collision avoidance. 
\end{abstract}

\section{Introduction}

Consider a future factory where intermediate products are moved between
workplaces by autonomous robots. The worker at a particular workplace
calls a robot, puts an object to a basket mounted on the robot and
orders the robot to autonomously deliver the object to another workspace
where the object will be retrieved by a different worker. Clearly,
an important requirement on such a system is that each robot must
be able to avoid collisions with other robots autonomously operating
in the shared space. The problem of avoiding collisions between individual
robots can be approached either from a control engineering perspective
by employing reactive collision avoidance or from AI perspective by
planning coordinated trajectories for the robots.

In the reactive approach, the robot follows the shortest path to its
current destination and attempts to resolve collision situations as
they appear. Each robot periodically observes positions and velocities
of other robots in its neighborhood. If there is a potential future
collision, the robot attempts to avert the collision by adjusting
its immediate heading and velocity. A number of methods have been
proposed~\cite{vanDenBerg2008RVO,Guy2009_ClearPath,lalish2012distributed}
that prescribe how to compute such collision-avoiding velocity in
a reciprocal multi-robot setting, with the most prominent one being
ORCA~\cite{vanBerg2011_ORCA}. These approaches are widely used in
practice thanks to their computational efficiency -- a collision-avoiding
velocity for a robot can be computed in a fraction of a millisecond~\cite{vanBerg2011_ORCA}.
However, these approaches resolve collisions only locally and thus
they cannot guarantee that the resulting motion will be deadlock-free
and that the robot will always reach its destination.

With a planning approach, the system first searches for a set of globally
coordinated collision-free trajectories from the start position to
the destination of each robot. After the planning has finished, the
robots start following their respective trajectories. If robots are
executing the resulting joint plan precisely (or within some predefined
tolerance), it is guaranteed that the robots will reach their destination
while avoiding collisions with other robots. It is however known that
the problem of finding coordinated trajectories for a number of mobile
objects from the given start configurations to the given goal configurations
is intractable. More precisely, the coordination of disks amidst polygonal
obstacles is NP-hard~\cite{SpirakisY84_Strong_NP_Hardness_of_Moving_Many_Discs}
and the coordination of rectangles in a bounded room is PSPACE-hard~\cite{hopcroft84}. 

Even though the problem is relatively straightforward to formulate
as a planning problem in the Cartesian product of the state spaces
of the individual robots, the solutions can be very difficult to find
using standard heuristic search techniques as the joint state-space
grows exponentially with increasing number of robots. The complexity
can be partly mitigated using techniques such as ID~\cite{Standley10}
or M{*}~\cite{Wagner2014AIJ_subdimensionalExpansion} that solve
independent sub-conflicts separately, but each such sub-conflict can
still be prohibitively large to solve because the time complexity
of the planning is still exponential in the number of robots involved
in the sub-conflict.

Instead, heuristic approaches are often used in practice, such as
prioritized planning~\cite{Erdmann87onmultiple}, where the robots
are ordered into a sequence and plan one-by-one such that each robot
avoids collisions with the higher-priority robots. This greedy approach
tends to perform well in uncluttered environments, but it is in general
incomplete and often fails in complex environments. 

When the geometric constraints are ignored, complete polynomial algorithms
can be designed such as Push\&Rotate~\cite{deWilde_push_and_rotate_aamas}
or Bibox~\cite{Surynek:2009:NAP:1703435.1703586}. These algorithms
solve so-called ``Pebble motion'' problem, in which pebbles(robots)
move on a given graph such that each pebble occupies exactly one vertex
and no two pebbles can occupy the same vertex or travel on the same
edge during one timestep. Although this model can be useful for coordination
of identical robots on coarse graphs%
\footnote{The graph must be coarse enough so that the bodies of two robots ``sitting''
on two different vertices will never overlap. The same has to hold
for two robots traveling on different edges of the graph.%
}, it is not applicable for trajectory coordination of robots with
fine-grained or otherwise rich motion models. 

Recall that in our factory scenario, a robot can be assigned a task
``online'' at any time during the operation of the system. If one
of the classical planners is used, the system would have to interrupt
all the robots and replan their trajectories each time a new task
is assigned, which is clearly undesirable. Further, although prioritized
planning can be run in a decentralized manner~\cite{VelagapudiSS10,cap_2013_b},
the complete approaches are difficult to run without a central solver.

The main contribution of this paper is COBRA -- a novel decentralized
method for collision avoidance in multi-robot systems with online-assigned
tasks to individual robots. We prove that for robots operating in
environments that were designed as a \emph{valid infrastructure}s,
the method is complete, i.e. all tasks are guaranteed to be successfully
carried out without collision. Furthermore, if a time-extended roadmap
planner is used for trajectory planning, the algorithm has polynomial\textbf{
}worst-case\textbf{ }time-complexity in the number of robots. The
applicability of the approach is demonstrated using a simulated multi-robot
system operating in real-world environments.

\section{Problem Statement}

Consider a 2-d environment (described by a set of obstacle-free coordinates
$\mathcal{W}\subseteq\mathbb{R}^{2}$) and a set of points $E\subset\mathcal{W}$
representing distinguished locations in the environment called endpoints
(modeling e.g. workplaces in a factory). The pair $(\mathcal{W},E)$
is called and infrastructure. Such an infrastructure is populated
by $n$ mobile robots with circular bodies indexed $1,\ldots,n$.
The radius of the body of robot $i$ is denoted $r_{i}$, the maximum
speed robot $i$ can move at is denoted by $v_{i}^{\mathrm{max}}$.
The robots are assumed to be holo\-nomic, i.e. at every time step,
the robot can select its immediate speed $v\in(0,v_{i}^{\mathrm{max}})$
and heading $\theta\in(-\pi,\pi)$ with the acceleration limits being
neglected. During the operation of the system, robots are assigned
\emph{relocation tasks} denoted $s\rightarrow g$ requesting the chosen
robot to move from its current position $s\in E$ to the given goal
endpoint $g\in E$. We assume that the robot cannot be interrupted
once it starts executing a particular relocation task and thus a new
relocation task can be assigned to a robot only after it has reached
the destination of the previously assigned task. The objective is
to find a trajectory for each such relocation task such that the robot
will reach the specified goal without colliding with other robots
operating in the system. Moreover, such trajectories should be found
in a decentralized fashion without a need for a central component
coordinating individual robots.

\subsubsection*{Notation}

In the remainder of the paper we will make use of the notion of a\emph{
space-time region}: When a spatial object, such as the body of a
robot, follows a given trajectory, then it can be thought of as occupying
a certain region in space-time $\mathcal{T}:=\mathcal{W}\times\left[0,\infty\right)$.
A dynamic obstacle $\Delta$ is then a region in such a space-time
$\mathcal{T}$. If $(x,y,t)\in\Delta$, then we know that the spatial
position $(x,y)$ is occupied by dynamic obstacle $\Delta$ at time
$t$. The function
\[
R_{i}^{\Delta}(\pi):=\left\{ (x,y,t):t\in[0,\infty)\wedge(x,y)\in R_{i}(\pi(t))\right\} 
\]
 maps trajectories of a robot $i$ to regions of space-time that the
robot $i$ occupies when its center point follows given trajectory
$\pi$. As a special case, let $R_{i}^{\Delta}(\emptyset):=\emptyset$.

\section{COBRA -- General Scheme}

In this section we will introduce Continuous Best-Response Approach
(COBRA), a decentralized method for trajectory coordination in multi-robot
systems. The general formulation of the algorithm assumes that each
of the robots in the system is able to compute an optimal trajectory
for itself from its current position to a given destination position
in the presence of moving obstacles without prescribing how such a
trajectory should be computed. In order to synchronize the information
flow and ensure that the robots plan their trajectory using up-to-date
information about the trajectories of others, robots are using a distributed
token-passing mechanism~\cite{ghosh2010DistributedSystems} in which
the token is used as a synchronized shared memory holding current
trajectories of all robots. We identify a token $\Phi$  with a set
$\{(a_{i},\pi_{i})\}$, which contains at most one tuple for each
robot $a=1\ldots,n$. Each such tuple represents the fact that robot
$a$ is moving along trajectory $\pi$. At any given time the token
can be held by only one of the robots and only this robot can read
and change its content. 

A robot newly added to the system tries to obtain the token and to
register itself with a trajectory that stays at its initial position
forever.After all the robots have been added to the system, the user
can start with assigning relocation tasks to individual robots.

When a new relocation task is received by robot $i$, the robot requests
the token $\Phi$. When the token is obtained, the robot runs a trajectory
planner to find a new ``best-response'' trajectory to fulfill the
relocation task. The trajectory is required a) to start at the robot's
current position $p$ at time $t_{\mathrm{now}}+t_{\mathrm{planning}}$
(at the end of the planning window), b) to reach the goal position
$g$ as soon as possible and remain at $g$ and c) to avoid collisions
with all other robots following trajectories specified in the token.
If such a trajectory is successfully found, the token is updated with
the newly generated trajectory and released so that other robots can
acquire it. Then, the robot starts following the found trajectory.
Once the robot successfully reaches the destination, it can accept
new relocation tasks. The pseudocode of the COBRA algorithm is listed
in Algorithm~\ref{alg:New-task-handling}.

\SetKwProg{alg}{Algorithm}{}{}
\SetKwProg{on}{On}{}{}
\SetKwProg{function}{Function}{}{}
\SetKwFunction{besttraji}{Best-traj$_i$}
\SetKwFunction{besttrajj}{Best-traj$_j$}
\SetKwFunction{pp}{PP}
\LinesNumbered
\begin{algorithm}[t]
\on{registered to the system at position $s$}{

	 $\Phi\leftarrow$ request token \;\label{acquire-token-registering}

	 $\pi\leftarrow$ $\pi(t)\text{ such that }\,\forall t\in[0,\infty):\;\pi(t)=s$
\;

	 $\Phi\leftarrow$ $\Phi\cup\left\{ (i,\pi)\right\} $ \;

	 release token $\Phi$ \;

}

\on{new relocation task $s\rightarrow g$ assigned}{

	 $\Phi\leftarrow$ request token \;\label{acquire-token-new-task}

	 assert $g$ is not a destination of another robot\;

	 $\Phi\leftarrow$ $\left(\Phi\setminus\left\{ (i,\pi')\,:\,(i,\pi')\in\Phi\right\} \right)$
\;

	 $\Delta\leftarrow\underset{(j,\pi_{j})\in\Phi}{\bigcup}R_{j}(\pi_{j})$\;

	 $t_{\mathrm{dep}}\leftarrow t_{\mathrm{now}}+t_{\mathrm{planning}}$
\;

	 $\pi\leftarrow$ \besttraji{$s$,$t_{\mathrm{dep}}$,$g$,$\Delta$}
\;\label{best-traj}

	 \If{$\pi=\emptyset$}{

		report failure

	}

	 $\Phi\leftarrow$ $\Phi\cup\left\{ (i,\pi)\right\} $ \;\label{token-update}

	 release token $\Phi$ \;

	 start following $\pi$ at $t_{\mathrm{dep}}$ \;

}

\function{\besttraji{$s$,$t_{s}$,$g$,$\Delta$}}{

	return trajectory $\pi$ for robot $i$ that reaches $g$ in minimal
time such that 

\quad{}a) $\pi(t_{s})=s$,

\quad{}b) $\exists t_{g}\,\forall t'\in[t_{g},\infty):\:\pi(t')=g$, 

\quad{}c) $R_{i}(\pi)\cap\Delta=0$ if it exists, 

otherwise return $\emptyset$\;

}

\caption{\label{alg:New-task-handling}COBRA -- specification for robot $i$.
The current time is denoted $t_{\mathrm{now}}$, the maximum time
that can be spent in trajectory planning is denoted $t_{\mathrm{planning}}$.}
\end{algorithm}

\subsection{Theoretical Analysis}

In this section we will derive a sufficient condition under which
is the presented mechanism complete, i.e. it guarantees that all relocation
tasks will be successfully carried out without collision. First, observe
that in general a robot may fail to find a collision-free trajectory
to its destination as illustrated in Figure~\ref{fig:Failure-of-algorithm}.
\begin{figure}
\begin{centering}
\includegraphics[width=0.6\columnwidth]{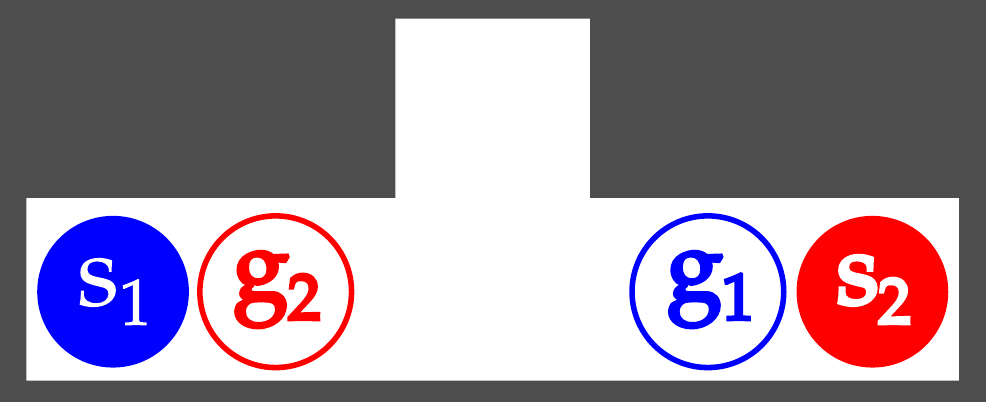}
\par\end{centering}

\caption{\label{fig:Failure-of-algorithm}Scenario in which robot~2 fails
to find a trajectory for a relocation task $s_{2}\rightarrow g_{2}$.
First, robot~1 plans a trajectory for relocation task $s_{1}\rightarrow g_{1}$.
It will travel at straight line connecting the two points at maximum
speed. Robot~2 can travel at the same maximum speed as robot~1 and
plans second from $s_{2}$ to $g_{2}$. However, it cannot reach the
exchange point in time and thus none of the available trajectories
reaches $g_{2}$ without collision with robot~1. The trajectory planning
by robot~2 fails.}
\end{figure}
 There is, however, a class of infrastructures, which we call \emph{valid
infrastructures}, where each trajectory planning is guaranteed to
succeed and consequently all relocation tasks will be carried out
without collision. 

Informally, a \emph{valid} infrastructure has its endpoints distributed
in such a way that any robot standing on an endpoint cannot completely
prevent other robots from moving between any other two endpoints.
In a valid infrastructure, a robot is always able to find a collision-free
trajectory to any other unoccupied endpoint by waiting for other robots
to reach their destination endpoint, and then by following a path
around the occupied endpoints, which is in a valid infrastructure
guaranteed to exist. 

In the following, we will describe the idea more formally. First,
let us introduce the necessary notation. Let $D(x,r)$ be a closed
disk centered at $x$ with radius $r$. Then, $\mathrm{int}_{r}\, X:=\left\{ x:D(x,r)\subseteq X\right\} $
is an $r$-interior of a set $X\subseteq\mathbb{R}^{2}$ and $\mathrm{ext}_{r}\, X:=\underset{x\in X}{\cup}D(x,r)$
is an $r$-exterior of a set $X\in\mathbb{R}^{2}$. A path is a continuous
function $p(\alpha):\,[0,1]\rightarrow\mathbb{R}^{2}$ which represents
a curve in $\mathbb{R}^{2}$. A trajectory is a function $\pi(t):\,[0,\infty)\rightarrow\mathbb{R}^{2}$
which represents a movement of a point in $\mathbb{R}^{2}$ and time.
Given a set $X\subseteq\mathbb{R}^{2}$, we say that a path $p$ is
$X$-avoiding iff $\forall\alpha:\; p(\alpha)\notin X$. Similarly,
a trajectory $\pi$ is $X$-avoiding iff $\forall t:\;\pi(t)\notin X$.
The trajectories $\pi_{i}$ and $\pi_{j}$ of robots $i$ and $j$
are said to be \emph{collision-free} iff $\forall t:\left|\pi_{i}(t)-\pi_{j}(t)\right|\geq r_{i}+r_{j}$.
A trajectory $\pi$ is $g$\emph{-terminal} iff $\exists t_{g}\,\forall t\in[t_{g},\infty):\;\pi(t)=g.$
Token $\Phi$ is called a) \emph{E-terminal} iff $\forall(a,\pi)\in\Phi:\,\pi\text{ is }g\text{-terminal}$
and $g\in E$, and b) \emph{collision-free} iff $\forall(a,\pi),\,(a',\pi')\in\Phi:\, a\neq a'\Rightarrow\pi\text{ and }\pi'\text{ are collision-free}.$
\begin{defn}
An infrastructure $(\mathcal{W},E)$ is called \emph{valid} for circular
robots having body radii $r_{1},\ldots,r_{n}$ if any two endpoints
$a,b\in E$ can be connected by a path in workspace $\mathrm{int}_{\overline{r}}\left(\mathcal{W}\setminus\underset{e\in E\setminus\{a,b\}}{\cup}D(e,\overline{r})\right)$,
where $\overline{r}=\max\{r_{1},\ldots,r_{n}\}$. 
\end{defn}
In other words, there must exists a path between any two endpoints
with at least $\overline{r}$-clearance with respect to the static
obstacles and at least $2\overline{r}$-clearance to any other endpoint.
Figure~\ref{fig:infrastructure} illustrates the concept of a valid
infrastructure.

\begin{figure}
\begin{centering}
\subfloat[Valid infrastructure: The workspace $\mathcal{W}$ and endpoints $\{e_{1},e_{2},e_{3},e_{4}\}$
for robots having radius $r$ form a valid infrastructure. ]{\centering{}\includegraphics[width=0.4\columnwidth]{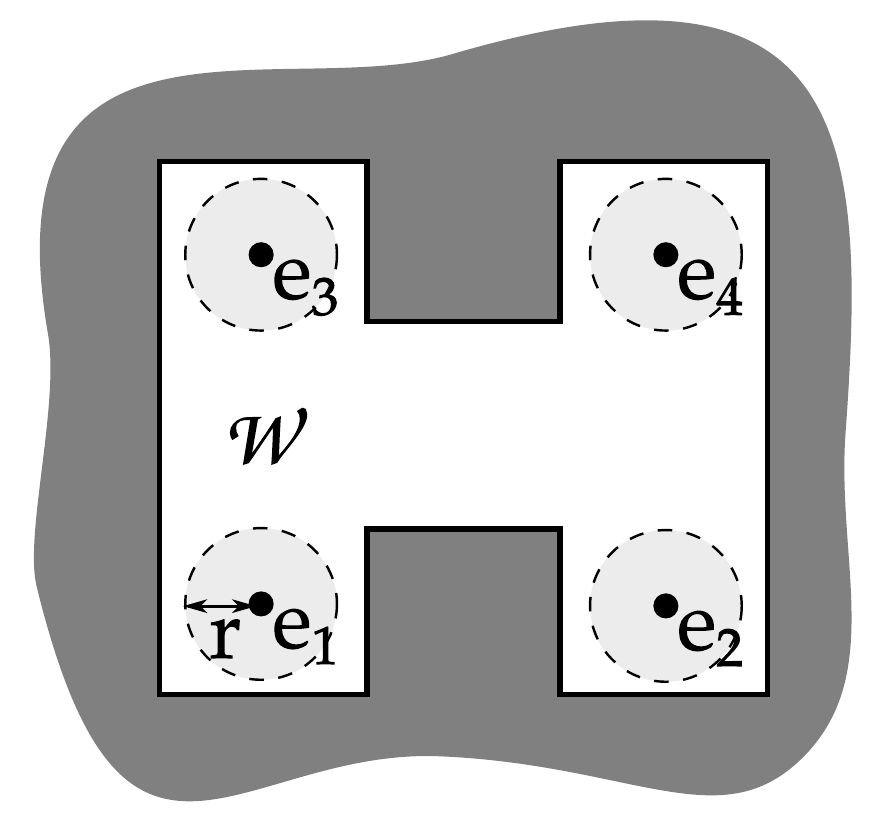}}~~~~\subfloat[Non-valid Infrastructure: The workspace $\mathcal{W}$ and endpoints
$\{e_{1},e_{2},e_{3}\}$ do not form a valid infrastructure because
there is no path from $e_{1}$ to $e_{2}$ with $2r$-clearance to
$e_{3}$ for a robot having radius~$r$.]{\begin{centering}
\includegraphics[width=0.4\columnwidth]{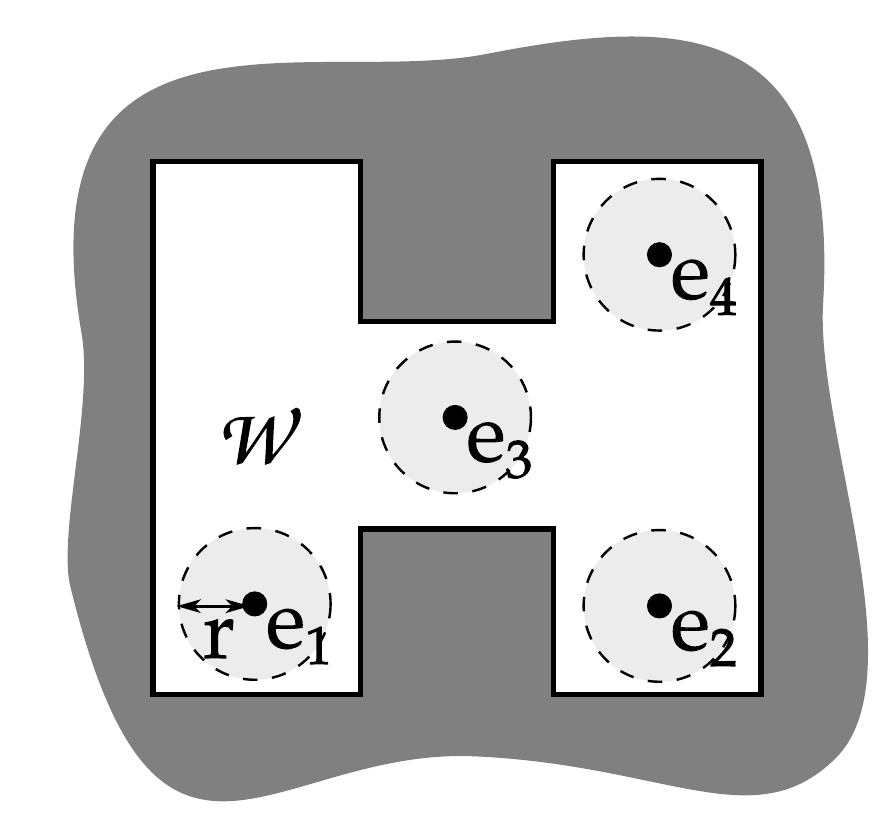}
\par\end{centering}

}
\par\end{centering}

\caption{\label{fig:infrastructure}Valid and non-valid infrastructure }
\end{figure}

The notion of valid infrastructures follows the structure typically
witnessed in man-made environments that are intuitively designed to
allow efficient transit of multiple people or vehicles. In such environments,
the endpoint locations where people or vehicles are stopping for longer
time are separated from the transit area that is reserved for travel
between these locations. 

If we take the road network as an example, the endpoints would be
the parking places and the system of roads is built in such a way
that any two parking places are reachable without crossing any other
parking place. Similar structure can be witnessed in offices and factories.
The endpoints would be all locations, where people may need to spend
longer periods of time, e.g. surroundings of the work desks or machines.
As we know from our every day experience, work desks and machines
are typically given enough free room around them so that a person
working at a desk or a machine does not obstruct people moving between
other desks or machines. We can see that real-world environments are
indeed often designed as valid infrastructures.

\subsubsection*{Completeness in Valid Infrastructures}

In this section, we show that if robots are operating in a valid infrastructure
and use COBRA for trajectory coordination, they will successfully
carry out all assigned relocation tasks without collisions. Our analysis
assumes the following setup of the multi-robot system. First, the
robots must operate in a valid infrastructure $(\mathcal{W},E)$.
When the system is initialized, robots are located at distinct initial
endpoints of the infrastructure. After the initialization is finished,
robots start accepting relocation tasks from some higher-level component,
which ensures that each destination is an endpoint of the infrastructure
and two robots will never have simultaneously assigned relocation
task with the same destination. Each robot uses a complete trajectory
planner and any trajectory generated for a robot, will be precisely
followed by robot.

\begin{prop}
\label{prop:Planning-doesnt-fail}If token $\Phi$ acquired during
handling of a new task $s\rightarrow g$ assigned to robot $i$ (on
line~\ref{acquire-token-new-task}~in Algorithm~\ref{alg:New-task-handling})
is E-terminal and collision-free, then the subsequent trajectory planning
succeeds and returns a trajectory that is $g$-terminal and collision-free
with respect to other trajectories in $\Phi$. \end{prop}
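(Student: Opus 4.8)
The plan is to reduce the claim to an \emph{existence} statement: since Best-traj$_i$ employs a complete planner, it suffices to exhibit one trajectory $\pi$ that starts at $s$ at time $t_{\mathrm{dep}}$, is $g$-terminal, and satisfies $R_i(\pi)\cap\Delta=\emptyset$; completeness then guarantees the planner returns some (time-optimal) trajectory meeting all three requirements. I would construct $\pi$ in the natural ``wait, then travel'' form: keep robot $i$ at $s$ until every other robot has come to rest, and then move it to $g$ along a path supplied by the valid-infrastructure property.

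First I would use the E-terminal hypothesis to extract a finite time $T$ after which all other robots are frozen at their terminal endpoints. Because $\Phi$ is finite and each $(a,\pi_a)\in\Phi$ is $g_a$-terminal, there is a common time $T$ with $\pi_a(t)=g_a\in E$ for all $t\ge T$ and all $a\neq i$. Two facts pin down \emph{where} these robots sit. Since the token acquired on line~\ref{acquire-token-new-task} still contains robot $i$'s own entry --- its previous trajectory, which rests at $s$ for all $t\ge t_{\mathrm{dep}}$ (robot $i$ having already reached $s$ before accepting the new task) --- the collision-free hypothesis gives $|\pi_a(t)-s|\ge r_i+r_a>0$ for every $a\neq i$ and all $t\ge t_{\mathrm{dep}}$; hence no terminal endpoint equals $s$, and, crucially, waiting at $s$ during $[t_{\mathrm{dep}},T]$ is itself collision-free. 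The \texttt{assert} on the task-handling branch guarantees that $g$ is the destination of no other robot, and E-terminality identifies each robot's destination with its terminal endpoint, so no terminal endpoint equals $g$ either. Consequently the occupied endpoints all lie in $E\setminus\{s,g\}$.

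Next I would invoke validity to route robot $i$ from $s$ to $g$. The definition supplies a path $p$ connecting $s$ and $g$ inside $\mathrm{int}_{\overline{r}}\!\left(\mathcal{W}\setminus\bigcup_{e\in E\setminus\{s,g\}}D(e,\overline{r})\right)$. Every point $x=p(\alpha)$ then satisfies $D(x,\overline{r})\subseteq\mathcal{W}$ (so the path has $\overline{r}$-clearance from static obstacles, and robot $i$, of radius $r_i\le\overline{r}$, stays in $\mathcal{W}$) and $D(x,\overline{r})\cap D(e,\overline{r})=\emptyset$ for each $e\in E\setminus\{s,g\}$, which forces $|x-e|\ge 2\overline{r}\ge r_i+r_a$. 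Applying this at the endpoints of $p$ as well shows that $s$ and $g$ themselves have $2\overline{r}$-clearance from every occupied endpoint. Parameterising $p$ by time at any admissible speed, robot $i$ therefore stays at center-distance at least $r_i+r_a$ from every (now stationary) robot $a$ throughout the traversal, and the same holds once it rests at $g$.

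Assembling the three phases --- rest at $s$ on $[t_{\mathrm{dep}},T]$, traverse $p$ on $[T,T+\tau]$ at speed $v_i^{\mathrm{max}}$, rest at $g$ afterwards --- yields a trajectory that meets all constraints of Best-traj$_i$: it starts at $s$ at $t_{\mathrm{dep}}$, is $g$-terminal, and is collision-free with every $\pi_a$ (the waiting phase by the collision-free hypothesis, the remaining phases by the clearance just derived), i.e.\ $R_i(\pi)\cap\Delta=\emptyset$. Existence of this witness plus completeness of the planner finishes the argument. I expect the main obstacle to be the bookkeeping that localises the occupied endpoints to $E\setminus\{s,g\}$ together with the safety of the waiting phase, both of which hinge on the easily-overlooked point that the acquired token still carries robot $i$'s old resting trajectory; the remaining geometry is a routine unpacking of the $\mathrm{int}_{\overline{r}}$ and disk-disjointness definitions into the $r_i+r_a$ collision bound.
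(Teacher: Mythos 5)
Your proposal is correct and follows essentially the same route as the paper's own proof: exhibit a ``wait at $s$ until all others are at rest, then traverse the valid-infrastructure path to $g$'' witness, justify the waiting phase via the collision-free token (which still holds robot $i$'s old trajectory resting at $s$), localise the occupied endpoints to $E\setminus\{s,g\}$, and invoke completeness of the planner. Your unpacking of the $\mathrm{int}_{\overline{r}}$ and disk-disjointness conditions into the $|x-e|\ge 2\overline{r}\ge r_i+r_a$ bound is if anything slightly more explicit than the paper's.
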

\begin{proof}
Because $(\mathcal{W},E)$ is a valid infrastructure and $s,g\in E$,
there exists a $\mathrm{ext}_{r}\,\left(E\setminus\left\{ s,g\right\} \right)$-avoiding
path $p$ going from $s$ to $g$. All trajectories in $\Phi$ are
E-terminal, which implies that eventually they reach one of the endpoints
and stay at that endpoint. Consequently, there exists a time point
$\overline{t}$ after which all the robots reach their destination
endpoints and stay there. A $g$-terminal and collision-free trajectory
for robot $i$ can be constructed as follows:
\begin{itemize}
\item In time interval $t\in[t_{s},\max(t_{s},\overline{t})]:\:\pi(t)=s$.
The trajectory cannot be in collision with any other trajectory in
$\Phi$ during this interval: From the assumption that new relocation
tasks can be assigned only after the previous relocation task has
been completed, we have $\forall t\in[t_{now},\infty):\,\pi(t)=g'$.
From the assumption that the start position of a new relocation task
$s$ must be the same as the goal of the robots previous task $g'$,
we know $\forall t>t_{now}:\,\pi(t)=s$. Since $\Phi$ is collision-free,
all trajectories of robots other than $j$ from $\Phi$ must be avoiding
$s$ with $r_{i}+r_{j}$ clearance, where $r_{j}$ is the radius of
the other robot in $\Phi$. Therefore, in time interval $[t_{s},\max(t_{s},\overline{t})]$,
robot $i$ will be collision-free with respect to other trajectories
stored in $\Phi$.
\item In time interval $t\in[\overline{t},\infty]:\:$$\pi$ follows path
$p$ until the goal position g is reached. The trajectory cannot be
in collision with any trajectory in $\Phi$ during this interval:
After time $\overline{t}$, all robots are at their respective goal
positions $G$, which satisfy: a) $G\subseteq E$, b) $s\notin G$,
otherwise some of the trajectories would have to be in collision with
$\pi$, which contradicts the finding from the previous point, and
c) $g\notin G$ because we assume that two robots cannot have simultaneously
relocation tasks with the same destination. We know that the path
$p$ avoids regions $\mathrm{ext}_{2\overline{r}}\,\left(E\setminus\left\{ s,g\right\} \right)$
and thus the trajectory cannot be in collision with any other trajectory
in $\Phi$ during the interval $[\overline{t},\infty)$.
\end{itemize}
Such a trajectory can always be constructed and thus any single-robot
complete trajectory planning method must find it.
\end{proof}

\begin{prop}
\label{prop:Phi-is-satisfactory-and-collision-free}Every time $\Phi$
is acquired by a robot during new relocation task handling (line~\ref{acquire-token-new-task}~in~Algorithm~\ref{alg:New-task-handling}),
$\Phi$ is E-terminal and collision-free. \end{prop}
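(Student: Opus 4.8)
The plan is to establish the statement as an \emph{invariant} maintained over the whole lifetime of the token, proved by induction on the sequence of token acquisitions. The first observation I would make is structural: because at any given time the token can be held by only one robot and only the holder may read and modify its content, the content of $\Phi$ does not change between the moment it is released and the moment it is next acquired. Hence it suffices to show that \emph{every release} of the token leaves $\Phi$ E-terminal and collision-free; any subsequent acquisition -- in particular one occurring during new-task handling on line~\ref{acquire-token-new-task} -- then sees an identical, and therefore E-terminal and collision-free, token. This reduces the claim to checking that each of the two code paths that write to the token, registration and new-task handling, preserves the invariant. For the base case I would note that before any robot has touched the token it is empty, $\Phi=\emptyset$, which is vacuously both E-terminal and collision-free.

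For the registration step, a registering robot $i$ inserts the tuple $(i,\pi)$ where $\pi(t)=s$ is constant at the robot's initial endpoint $s\in E$. Such a trajectory is trivially $s$-terminal, hence E-terminal, so E-terminality is preserved. For collision-freeness I would combine the assumption that robots are initialized at \emph{distinct} endpoints with the endpoint-separation property forced by a valid infrastructure: the clearance condition in the definition (the connecting path avoids $\mathrm{ext}_{2\overline{r}}(E\setminus\{a,b\})$) implies $|e-e'|\ge 2\overline{r}\ge r_i+r_j$ for any two distinct endpoints $e,e'$. Since registration precedes all task assignment, every tuple already in $\Phi$ is a constant trajectory sitting at a distinct endpoint, so the newly added constant trajectory is collision-free with all of them.

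The crux is the new-task step, where I would invoke Proposition~\ref{prop:Planning-doesnt-fail}. Taking as induction hypothesis that the token $\Phi$ acquired on line~\ref{acquire-token-new-task} is E-terminal and collision-free, observe that the update first removes robot $i$'s own previous tuple $(i,\pi')$ and then, after planning, inserts the new tuple $(i,\pi)$. Removing a tuple leaves the set E-terminal and collision-free, since both properties are inherited by subsets. By Proposition~\ref{prop:Planning-doesnt-fail}, applied under this induction hypothesis, the subsequent planning succeeds and returns a trajectory $\pi$ that is $g$-terminal -- hence E-terminal, as $g\in E$ -- and collision-free with respect to every remaining trajectory in $\Phi$. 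Therefore inserting $(i,\pi)$ preserves E-terminality, and combining the pairwise guarantee of Proposition~\ref{prop:Planning-doesnt-fail} with the induction hypothesis (the remaining trajectories are already pairwise collision-free among themselves) shows the whole updated token is collision-free. Thus the released token again satisfies the invariant, closing the induction.

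The main obstacle I anticipate is not any single calculation but the careful bookkeeping of the mutual dependence between the two propositions: Proposition~\ref{prop:Planning-doesnt-fail} is used inside the inductive step under the very property the induction is establishing. I would therefore make explicit that the induction hypothesis supplies \emph{exactly} the precondition Proposition~\ref{prop:Planning-doesnt-fail} requires, and that this is not circular, because the invariant holding at acquisition $k$ is used only to justify the planner's output at step $k$, which in turn establishes the invariant at the release of step $k$ (hence at acquisition $k+1$). A secondary point I would state carefully is the reliance of the registration base case on endpoint separation, since that is the one place where the geometry of a valid infrastructure enters this particular argument.
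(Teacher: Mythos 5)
Your proposal is correct and follows essentially the same route as the paper: induction on the sequence of token acquisitions, with the registration phase supplying the base case (constant trajectories at distinct endpoints separated by at least $2\overline{r}$) and Proposition~\ref{prop:Planning-doesnt-fail} closing the inductive step. Your version is somewhat more explicit than the paper's -- in particular about the token being unchanged between release and acquisition, and about why the use of Proposition~\ref{prop:Planning-doesnt-fail} inside the induction is not circular -- but these are elaborations of the same argument, not a different one.
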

\begin{proof}
By induction on $\Phi$. Take an arbitrary sequence of individual
robots acquiring the token $\Phi$ and updating it. Let the induction
assumption be: Whenever $\Phi$ is acquired by a robot to handle a
new relocation task, $\Phi$ is E-terminal and collision-free. 

Base case: When the last robot registers itself, all robots have a
trajectory in $\Phi$ that remains at their start positions forever.
Since the start position are assumed to be distinct and at least $2\overline{r}$
apart, then the $\Phi$ is collision-avoiding. Further, $\Phi$ is
trivially E-terminal because after the last robot registers, all robots
stay at their initial position, which is an endpoint. 

Inductive step: From the inductive assumption, we know that $\Phi$
acquired at line~\ref{acquire-token-new-task}~in~Algorithm~1
is E-terminal and collision-free. Using Proposition~\ref{prop:Planning-doesnt-fail},
we see that for any relocation task $s\rightarrow g$, the algorithm
will find a trajectory $\pi$ that is g-terminal and collision-free
with respect to other trajectories in $\Phi$. From our assumption,
the destination $g\in E$. By adding trajectory $\pi$ to $\Phi$
at line~\ref{token-update}~in~Algorithm~\ref{alg:New-task-handling},
the token $\Phi$ remains E-terminal and collision-free. 
\end{proof}

\begin{thm}
\label{thm:General_Cobra_is_complete} If a team of robots operates
in a valid infrastructure $(\mathcal{W},E)$ and the COBRA algorithm
is used to coordinate relocation tasks between the endpoints of the
infrastructure, then all relocation tasks will be successfully carried
out without collision. \end{thm}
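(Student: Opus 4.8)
The plan is to combine the two preceding propositions, which together already do essentially all the work. Proposition~\ref{prop:Phi-is-satisfactory-and-collision-free} guarantees that \emph{every} time the token $\Phi$ is acquired during the handling of a new relocation task, it is E-terminal and collision-free; Proposition~\ref{prop:Planning-doesnt-fail} then guarantees that under exactly these hypotheses the ensuing trajectory planning on line~\ref{best-traj} succeeds and returns a $g$-terminal trajectory that is collision-free with respect to the other trajectories in $\Phi$. So the backbone of the argument is simply to chain these two facts: for an arbitrary relocation task, the token seen at line~\ref{acquire-token-new-task} satisfies the hypotheses of Proposition~\ref{prop:Planning-doesnt-fail}, hence \besttraji{} never returns $\emptyset$ and the ``report failure'' branch is never taken.

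First I would fix an arbitrary relocation task $s\rightarrow g$ assigned to some robot $i$ and consider the moment the token is acquired on line~\ref{acquire-token-new-task}. By Proposition~\ref{prop:Phi-is-satisfactory-and-collision-free} the token $\Phi$ is E-terminal and collision-free at that instant, so the hypotheses of Proposition~\ref{prop:Planning-doesnt-fail} hold verbatim. Applying it yields a trajectory $\pi$ that reaches $g$ (is $g$-terminal) and is collision-free against all other trajectories in $\Phi$; in particular $\pi\neq\emptyset$, so the algorithm proceeds past the failure check, updates the token on line~\ref{token-update}, and robot~$i$ follows $\pi$. Because $\pi$ is $g$-terminal, robot~$i$ physically reaches $g$, and since trajectories are assumed to be followed precisely and $\pi$ is collision-free with the stored trajectories, the task is carried out without collision. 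As this holds for an arbitrary task, induction over the whole (arbitrary) sequence of assigned tasks completes the argument.

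The one point requiring a little care is the transition from ``collision-free with respect to the trajectories stored in $\Phi$'' to ``collision-free against the physically moving robots.'' This rests on the standing assumption stated in the setup that any generated trajectory is followed precisely, so a robot's actual space-time occupancy coincides with the region $R_i(\pi)$ that the planner reasoned about, and on the token's role as the synchronized single source of truth about everyone's committed trajectory. I would note explicitly that at every instant each robot's executed motion equals some trajectory recorded in $\Phi$ at the time it was committed, so pairwise collision-freeness in $\Phi$ transfers directly to the executed motions.

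I do not expect a genuine obstacle here, since the heavy lifting---the geometric construction of an escape trajectory using the valid-infrastructure clearance property, and the inductive maintenance of the E-terminal and collision-free invariant---has already been discharged in Propositions~\ref{prop:Planning-doesnt-fail} and~\ref{prop:Phi-is-satisfactory-and-collision-free}. The mild subtlety is bookkeeping rather than mathematics: one must be careful that the induction is taken over the total order in which tokens are acquired (the token-passing mechanism guarantees that at most one robot reads or writes $\Phi$ at a time, so such a linear order is well defined), and that ``successfully carried out'' is interpreted as both reaching $g$ and doing so collision-free, each of which is supplied by the $g$-terminal and collision-free conclusions of Proposition~\ref{prop:Planning-doesnt-fail}.
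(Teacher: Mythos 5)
Your proposal is correct and follows essentially the same route as the paper: both arguments reduce the theorem to Propositions~\ref{prop:Planning-doesnt-fail} and~\ref{prop:Phi-is-satisfactory-and-collision-free} for task completion, and both handle physical collisions by observing that of any two executed trajectories the later-committed one was planned against the earlier one stored in the token. The only difference is presentational --- the paper phrases it as a proof by contradiction with two cases, while you argue directly --- which does not change the substance.
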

\begin{proof}
By contradiction. Assume that a) there can be a relocation task $s\rightarrow g$
assigned to robot $i$ that is not carried out successfully or b)
two robot collide at some time point during the operation of the system. 

Case A: A relocation task $s\rightarrow g$ assigned to a robot $i$
has not been successfully completed. We assume that robots are able
to perfectly follow any given trajectory $\pi$. Therefore either
$\pi$ is not $g$-terminal or robot $i$ collided with another robot
during the execution of the trajectory. From Proposition~\ref{prop:Planning-doesnt-fail}~and~\ref{prop:Phi-is-satisfactory-and-collision-free}
we know that the \besttraji routine will always return a $g$-satisfactory
trajectory. The possibility that the robot collided while carrying
out a relocation task is covered by Case B and is shown to be impossible.
Thus, the relocation task $s\rightarrow g$ assigned to robot $i$
will be completed successfully.

Case B: Robots $i$ and $j$ collide. Since we assume perfect execution
of trajectories, it must be the case that there is $\pi_{i}$ and
$\pi_{j}$ that are not collision-free. Since the collision relation
is symmetrical, w.l.o.g., it can be assumed that $\pi_{j}$ was added
to $\Phi$ later than $\pi_{i}$. This implies that $\pi_{i}$ was
in the token $\Phi$ when $\pi_{j}$ was generated within \besttrajj
routine. However, this would imply that the trajectory planning returned
a trajectory that is not collision-free with respect to trajectories
in $\Phi$, which is a contradiction with constraints used during
the trajectory planning. Thus, robots $i$ and $j$ do not collide.

We can see that neither case A nor case B can be achieved and thus
all relocation tasks are carried out without any collision.
\end{proof}

\section{COBRA with a Time-extended Roadmap Planner}

In the previous section we have presented the general scheme of COBRA
that assumes that every robot is able to find an optimal best-response
trajectory for itself using an arbitrary complete algorithm. In practice,
such a planning would be often done via graph search in a discretized
representations of the static workspace. In this section, we will
analyze the properties of COBRA when all robots use a roadmap-based
planner to plan their best-response trajectory.

The function \besttraji{$s$,$t_{s}$,$g$,$\Delta$} used on line~\ref{best-traj}
in Algorithm~\ref{alg:New-task-handling} returns an optimal trajectory
for a particular robot $i$ from $s$ to $g$ starting at time $t_{s}$
that avoids space-time regions $\Delta$ occupied by other robots.
We will now assume that the robots use a time-extended roadmap planner
to compute such a trajectory. The planner takes a graph representation
of the static workspace and extends it with a discretized time dimension.
The resulting time-extended roadmap is subsequently searched using
Dijkstra's shortest-path algorithm (possibly with some admissible
heuristic). The time-extended roadmap planner can be implemented as
follows. Let us have a ``roadmap'' graph $G=(V,L)$ representing
an arbitrary discretization of the static workspace $\mathcal{W}$,
where $V\subseteq\mathcal{W}$ represent the discretized positions
from $\mathcal{W}$ and $L\subseteq V\times V$ represents possible
straight-line transitions between the vertices. The time-extended
graph $\overline{G}=(\overline{V},\overline{L})$ for robot $i$ with
time step $\delta t$ starting at position $s$ at time $t_{s}$ is
recursively defined as follows:

\begin{gather}
(s,t_{s})\notin\Delta\Rightarrow(s,t_{s})\in\overline{V}\\
\text{and}\nonumber \\
\forall(v,t)\in V\,\wedge\,(v_{1},v_{2})\in L:\,\nonumber \\
t'=\left\lceil \frac{\left|v_{2}-v_{1}\right|}{\delta t\cdot v_{i}^{max}}\right\rceil \delta t\,\wedge\mathrm{line}\left((v_{1},t),(v_{2},t')\right)\cap\Delta=\emptyset\\
\Rightarrow\nonumber \\
(v_{2},t')\in\overline{V}\,\wedge\,\left((v_{1},t),(v_{2},t')\right)\in\overline{L},\nonumber 
\end{gather}
where $\left\lceil \cdot\right\rceil $ represents ceiling, $\left|\cdot\right|$
is the Euclidean norm, and $\mathrm{line(x,y)}$ represents the set
of points lying on the line $x\rightarrow y$. It is important to
realize that because we force each space-time edge to end a time that
is a multiple of $\delta$t, some of the edges may in fact be traveled
at a slower speed than $v_{i}^{\mathrm{max}}$. The best-response
trajectory is then constructed by finding the shortest path in $\overline{G}$
starting from the vertex $(s,t_{s})$ to a goal vertex that satisfies
$(g,t_{g})\,\wedge\,\mathrm{line\left((g,t_{g}),(g,\infty)\right)\cap\Delta=\emptyset}$.

\subsection*{Valid Infrastructure Roadmap}

The notion of a valid infrastructure can be extended to discretized
representations of the robots' common workspace as follows.
\begin{defn}
A graph $G=(V,L)$ is a roadmap for a valid infrastructure $(\mathcal{W},E)$
and robots with radii $r_{1},\ldots r_{n}$ if $E\subseteq V$ and
any two different endpoints $a,b\in E$ can be connected by a path
$p=l_{1},\ldots,l_{k}$ in graph $G$ such that 
\[
\underset{i=1,\ldots,k}{\forall}\mathrm{line(}l_{i})\subseteq\mathrm{int}_{\overline{r}}\left(\mathcal{W}\setminus\underset{e\in E\setminus\{a,b\}}{\cup}D(e,\overline{r})\right),
\]
 where $\overline{r}=\max\{r_{1},\ldots,r_{n}\}$. 
\end{defn}
Analogically to the valid infrastructure, we require that the roadmap
contains a path that connects any two endpoints with at least $\mbox{\ensuremath{\overline{r}}}$
clearance to static obstacles and $2\overline{r}$ clearance to other
endpoints.

\subsection*{Theoretical Analysis}

In this section, we will show that COBRA with a time-extended roadmap
planner operating on a roadmap for valid infrastructure is complete
and the trajectory for any relocation task will be computed in time
quadratic to the number of robots present in the system. In the following,
we will assume that $G=(V,L)$ is a roadmap for a valid infrastructure
$(\mathcal{W},E)$. 
\begin{thm}
If $G=(V,L)$ is a roadmap for a valid infrastructure $(\mathcal{W},E)$
and COBRA with a time-extended roadmap planner is used to coordinate
relocation tasks between the endpoints of the infrastructure, then
all relocation tasks will be successfully carried out without collision.\end{thm}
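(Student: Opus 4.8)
The plan is to reproduce the three-step argument of Theorem~\ref{thm:General_Cobra_is_complete} almost verbatim, since the roadmap version differs from the general scheme only in \emph{how} the best-response trajectory is computed. Concretely, I would re-establish a roadmap analogue of Proposition~\ref{prop:Planning-doesnt-fail} and then close with the same proof by contradiction. The induction of Proposition~\ref{prop:Phi-is-satisfactory-and-collision-free} (the token stays $E$-terminal and collision-free) and the final case analysis of Theorem~\ref{thm:General_Cobra_is_complete} (no task fails, no two robots collide) never reference the internal workings of the planner---they only use that each invocation returns a $g$-terminal, collision-free trajectory. Hence those two parts carry over unchanged, and the only statement requiring a genuinely new proof is: \emph{if the acquired token is $E$-terminal and collision-free, then the time-extended roadmap planner succeeds.}

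To prove that, I would exhibit an explicit feasible trajectory inside the time-extended graph $\overline{G}$ and then invoke completeness of Dijkstra's search on $\overline{G}$, which returns a shortest (hence in particular \emph{some}) valid trajectory whenever one exists. The spatial skeleton of the witness comes from the roadmap-for-valid-infrastructure definition: since $s,g\in E$, there is a graph path $l_{1},\dots,l_{k}$ from $s$ to $g$ whose segments lie in $\mathrm{int}_{\overline{r}}\!\left(\mathcal{W}\setminus\bigcup_{e\in E\setminus\{s,g\}}D(e,\overline{r})\right)$, i.e. with $\overline{r}$-clearance to static obstacles and, since $D(x,\overline{r})$ is then disjoint from every $D(e,\overline{r})$, center-to-center distance $\ge 2\overline{r}\ge r_{i}+r_{j}$ to any other endpoint. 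Because the token is $E$-terminal, there is a grid time $\overline{t}$ (a multiple of $\delta t$) after which every other robot sits motionless on its endpoint. The witness waits in place at $s$ until $\overline{t}$ and then traverses $l_{1},\dots,l_{k}$ to $g$, remaining at $g$ thereafter. Collision-freeness is argued exactly as in Proposition~\ref{prop:Planning-doesnt-fail}: before $\overline{t}$ the robot occupies $s$, which every other trajectory already avoids with $r_{i}+r_{j}$ clearance; after $\overline{t}$ the $2\overline{r}$-clearance bound gives the required body separation from each settled robot; and the terminal ray $\mathrm{line}((g,t_{g}),(g,\infty))$ is clear because no concurrent task shares destination $g$.

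The step I expect to be the main obstacle is reconciling this continuous construction with the $\delta t$-discretized structure of $\overline{G}$, whose edges are forced to terminate at times $\lceil |v_{2}-v_{1}|/(\delta t\, v_{i}^{\mathrm{max}})\rceil\,\delta t$. Three points must be checked carefully: (i) the waiting phase must be realized by legal transitions of $\overline{G}$ that advance time by $\delta t$ in place (the ceiling rule degenerates for zero-length steps, so one must verify the roadmap actually admits such wait transitions), allowing the robot to stall until a grid instant $\ge\overline{t}$; (ii) the ceiling rule may force some segments $l_{i}$ to be traversed \emph{slower} than $v_{i}^{\mathrm{max}}$, inserting extra idle time mid-path---but since all other robots are static after $\overline{t}$, any such added waiting occurs at a position already shown to be collision-free, so feasibility is preserved; and (iii) the discovered goal vertex must satisfy the acceptance condition at $(g,t_{g})$ with $\mathrm{line}((g,t_{g}),(g,\infty))\cap\Delta=\emptyset$, which again follows from $g$ being no one else's destination. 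Once it is verified that each edge $((v_{1},t),(v_{2},t'))$ of the witness satisfies $\mathrm{line}((v_{1},t),(v_{2},t'))\cap\Delta=\emptyset$ under the clearance bounds, the witness is a bona fide path in $\overline{G}$ to an accepting goal vertex; completeness of Dijkstra then guarantees the planner returns a $g$-terminal, collision-free trajectory, re-establishing the roadmap analogue of Proposition~\ref{prop:Planning-doesnt-fail} and, with the two inherited parts, the theorem.
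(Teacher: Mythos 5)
Your proposal is correct and follows essentially the same route as the paper: the paper likewise reduces the theorem to a roadmap analogue of Proposition~\ref{prop:Planning-doesnt-fail}, exhibits the witness that waits at $s$ until $\left\lceil \overline{t}/\delta t\right\rceil \delta t$ and then traverses a clearance-respecting roadmap path to $g$, and inherits the rest of the argument from Theorem~\ref{thm:General_Cobra_is_complete} unchanged. You are in fact somewhat more careful than the paper, which glosses over the discretization issues you flag (wait transitions in $\overline{G}$, the ceiling rule slowing some edges) and loosely says ``the shortest path on $G$'' where, as you correctly note, one must take the path guaranteed by the roadmap-for-valid-infrastructure definition.
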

\begin{proof}
The line of argumentation used to prove the completeness of COBRA
with an arbitrary complete trajectory planner (Theorem~\ref{thm:General_Cobra_is_complete})
is also applicable for COBRA with time-extended roadmap planner (TERP)
-- we only need to show that Proposition~\ref{prop:Planning-doesnt-fail}
holds when TERP is used to find the best-response trajectory. The
original argument can be adapted as follows: If the robot acquires
an E-terminal token, there is a time-point $\overline{t}$, when all
robots in $\Phi$ reach their destinations and stay there. A best
response trajectory for a relocation task $s\rightarrow g$ can always
be constructed by waiting at the robot's start endpoint until $\left\lceil \frac{\overline{t}}{\delta t}\right\rceil \delta t$
(the first discrete time-point, when all robots from $\Phi$ are on
an endpoint) and then by following the shortest path from $s$ to
$g$ on roadmap $G$. 
\end{proof}

\subsection*{Complexity}

In this section we derive the computational complexity of the COBRA
algorithm when the time-extended roadmap planner is used for finding
the best-response trajectory. We will focus on the complexity of trajectory
planning for a single relocation task and show that if a robot acquires
a token with trajectories of other robots, all the robots in the token
will reach their respective destination endpoints in a relative time
that is bounded by a factor linearly dependent on the number of robots.
Therefore, the state space that needs to be searched during the trajectory
planning is linear in the number of robots.

\paragraph*{Assumptions and Notation: \textmd{Let $f(\pi)$ denote the time when
trajectory $\pi$ reaches its destination, i.e. $f(\pi):=\min\, t\,\text{s.t.}\, t'\geq t:\,\pi(t')=g,\: g\in E$.
Further, let $A(\Phi,t)$ denote the set of ``active'' trajectories
from token $\Phi$ at time $t$ defined as $A(\Phi,t):=\left\{ \pi:(\cdot,\pi)\in\Phi\,\text{s.t. }f(\pi)\geq t\right\} $.
The latest time when all trajectories in token $\Phi$ reach their
destination endpoints is given by function $F(\Phi):=\protect\underset{(\cdot,\pi)\in\Phi}{\max}f(\pi)$.
Further, let $r$ denote the duration of the longest possible individually
optimal trajectory between two endpoints by any of the robots: 
\[
r=\protect\underset{e_{1},e_{2}\in E}{\max}\frac{\text{length of shortest path from }e_{1}\text{ to }e_{2}\text{ in }G}{\protect\underset{i=1\ldots n}{\min}v_{i}^{max}}.
\]
}}
\begin{lem}
\label{lem:final_time_of_active_is_bounded}At any time point $t$
we have $F(A(\Phi,t))\leq t+\left|A(\Phi,t)\right|r$.\end{lem}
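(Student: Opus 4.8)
The plan is to first establish a \emph{per-trajectory} upper bound on the completion time of any single trajectory stored in $\Phi$, and then to chain such bounds together along a sequence of mutually-blocking robots whose length is controlled by $|A(\Phi,t)|$. For the per-trajectory bound I would reuse the feasibility construction from the proof of Proposition~\ref{prop:Planning-doesnt-fail} in its time-extended roadmap form: when robot $i$ planned its trajectory $\pi$ it departed at some time $d_\pi$ facing a collision-free, E-terminal pre-token, and one feasible trajectory is to wait at its start endpoint until the first discrete instant after every robot it must avoid has reached its own endpoint, and then to follow a shortest $G$-path, which takes at most $r$. Since \besttraji returns a time-optimal trajectory, this yields $f(\pi)\le\max\!\big(d_\pi,\,W_\pi\big)+r$, where $W_\pi$ is the latest completion time among the trajectories $\pi$ had to avoid when it was planned (the rounding to multiples of $\delta t$ contributes only lower-order terms that can be folded into $r$).

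Next I would fix $t$, taking it no earlier than the moment $\Phi$ was acquired so that every $d_\pi\le t$, and build a chain starting from the last-finishing active trajectory $\pi^{\ast}$, i.e.\ the one attaining $f(\pi^{\ast})=F(A(\Phi,t))$. Given a chain element $\rho_m$, let $\rho_{m+1}$ be the trajectory attaining the maximum $W$ in the per-trajectory bound for $\rho_m$, so that $f(\rho_m)\le\max(d_m,\,f(\rho_{m+1}))+r$. I would continue the chain while $f(\rho_{m+1})\ge t$ and stop as soon as a blocker whose completion time lies strictly below $t$ is reached. Telescoping these inequalities along a chain of length $L$ (using $d_m\le t$ at each step and $f(\rho_L)<t$ at the last) gives $F(A(\Phi,t))=f(\rho_0)\le t+Lr$, so the entire lemma reduces to the combinatorial claim that the chain can be taken to have length $L\le|A(\Phi,t)|$.

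To bound $L$ I would argue that every element $\rho_0,\dots,\rho_{L-1}$ is a \emph{distinct} trajectory lying in $A(\Phi,t)$. Distinctness follows from a monotonicity-of-registration argument combined with a case split, since each blocker $\rho_{m+1}$ was present in the token at the instant $\rho_m$ was planned. If $\rho_{m+1}$ is still $\Phi$'s current trajectory for that robot, then it was registered strictly earlier than $\rho_m$, so registration times strictly decrease along the chain and no robot can recur; moreover $\rho_{m+1}\in\Phi$ and, being a continued step, $f(\rho_{m+1})\ge t$, whence $\rho_{m+1}\in A(\Phi,t)$. If instead that robot was later reassigned, then its pre-token trajectory had been \emph{completed} before the new task could be issued---by the assumption that a robot accepts a new task only after reaching the destination of its current one---which forces $f(\rho_{m+1})<t$ and terminates the chain. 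Hence all interior elements are distinct members of $A(\Phi,t)$ and $L\le|A(\Phi,t)|$, closing the argument.

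The step I expect to be the main obstacle is precisely this last bookkeeping: the per-trajectory bound is stated relative to the \emph{pre-token} a robot faced when it planned, whereas $A(\Phi,t)$ refers to the \emph{current} snapshot $\Phi$, and the two differ for any robot since reassigned. Reconciling them---showing that a blocker which is no longer current must have finished before $t$, and that a blocker which is still current contributes a strictly earlier registration time so the chain cannot cycle---is what forces the one-task-at-a-time assumption into the proof and is exactly where the linear count $|A(\Phi,t)|$ (rather than something larger) originates. The remaining pieces, namely the feasibility construction and the telescoping, are routine given the earlier propositions.
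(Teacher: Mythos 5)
Your proof is correct, but it takes a genuinely different route from the paper's. The paper proves the lemma by a forward induction over the sequence of token updates: at the $k$-th task handling at time $\tau^{k}$ the newly planned trajectory satisfies $f(\pi_i^\star)\leq F(A(\Phi^{k-1},\tau^{k}))+r$ via the same wait-then-traverse feasibility construction you use, and since the robot's old trajectory was inactive while the new one is active, $\left|A\right|$ grows by exactly one, so the invariant is preserved at each update. You instead work backwards from the last-finishing active trajectory at an arbitrary time $t$, chaining per-trajectory bounds through successive worst blockers until you hit one that has already finished, and you bound the chain length by the distinctness of its interior elements inside $A(\Phi,t)$. This costs you the extra bookkeeping you correctly flag as the crux (stale versus current blockers, strictly decreasing registration times), but it buys something the paper's proof glosses over: the paper verifies the inequality only at update instants and asserts that it ``also holds for the time period until the next update,'' yet between updates the active set shrinks as trajectories finish, which drops the right-hand side by a full $r$ per finished trajectory while $F$ of the surviving trajectories need not drop at all -- so persistence is not automatic from the invariant alone and genuinely requires tracing finish times back through blockers, exactly as you do. Since the lemma is only invoked at task-handling instants (Lemma~\ref{lem:Best_response_trajectory_duration_is_limited}), the paper's weaker form suffices downstream, but your version delivers the statement as literally written, for any time point $t$. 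Both arguments share the same minor imprecisions (the $t_{\mathrm{planning}}$ offset between token acquisition and departure, and the tacit assumption that the endpoint-avoiding path also has duration at most $r$), so these are not points against you.
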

\begin{proof}
Observe that the token only changes during new task handling. Let
us consider an arbitrary sequence of tasks being handled by different
robots at time-point $\tau_{1},\tau_{2}\ldots$. Initially, at time
$t=0$ the token is empty and the inequality holds trivially: $F(\emptyset)\leq0$.
We will inductively show that the inequality holds after the token
update during each such task handling, which implies that it also
holds for the time period until the next update of the token. Now,
let us take $k^{\mathrm{th}}$-task handling by robot $i$ with current
trajectory $\pi_{i}$, that at time $\tau^{k}$ obtains token $\Phi^{k-1}$.
By induction hypothesis we have $F(A(\Phi^{k-1},\tau^{k}))\leq\tau^{k}+\left|A(\Phi^{k-1},\tau^{k})\right|r$.
We know that $\pi_{i}\notin A(\Phi^{k-1},\tau^{k})$, because the
robot can only accept new tasks after it has finished the previous
one and thus $f(\pi_{i})<\tau^{k}$. Further, we know that 1) $\forall\pi\in\Phi^{k-1}\setminus A(\Phi^{k-1},\tau^{k})\;\forall t>\tau^{k}:\:\pi(t)\in E$
and 2) $\forall\pi\in A(\Phi^{k-1},\tau^{k})\;\forall t>F(A(\Phi^{k-1},\tau^{k}))\:\pi(t)\in E$,
i.e. ``inactive'' trajectories are on the endpoints immediately,
while active trajectories will reach their endpoints and stay there
after $F(A(\Phi^{k-1},\tau^{k})$. Then, the robot finds its new trajectory
$\pi_{i}^{\star}$, which can be in the worst-case constructed by
waiting at the current endpoint and then following the shortest endpoint-avoiding
path (which is in infrastructures guaranteed to exist and its duration
is bounded by $r$) to the destination endpoint. Such a path reaches
the destination in $\tau^{k}\leq f(\pi_{i}^{\star})\leq F(A(\Phi^{k-1},\tau^{k})+r$.
Then, the robot updates token $\Phi^{k}\leftarrow\Phi^{k-1}\setminus\left\{ \pi_{i}\right\} \cup\left\{ \pi_{i}^{\star}\right\} $.
We know that $\pi_{i}\notin A(\Phi^{k-1},\tau^{k})$, but $\pi_{i}^{\star}\in A(\Phi^{k},\tau^{k})$
and thus $\left|A(\Phi^{k},\tau^{k})\right|=\left|A(\Phi^{k-1},\tau^{k})\right|+1$.
By rearrangement 
\[
\begin{aligned}F(A(\Phi^{k-1},\tau^{k})) & \leq\tau^{k}+\left|A(\Phi^{k-1},\tau^{k})\right|r\\
F(A(\Phi^{k},\tau^{k})) & \leq F(A(\Phi^{k-1},\tau^{k}))+r\\
F(A(\Phi^{k},\tau^{k})) & \leq\tau^{k}+\left|A(\Phi^{k-1},\tau^{k})\right|r+r\\
F(A(\Phi^{k},\tau^{k})) & \leq\tau^{k}+\left|A(\Phi^{k-1},\tau^{k})+1\right|r\\
F(A(\Phi^{k},\tau^{k})) & \leq\tau^{k}+\left|A(\Phi^{k},\tau^{k})\right|r
\end{aligned}
.
\]
\end{proof}
\begin{lem}
\label{lem:Best_response_trajectory_duration_is_limited}During each
relocation task handling of robot $i$ at time $t$, there is a trajectory
$\pi$ that reaches the destination of the relocation task in time
$f(\pi)\leq t+nr$. \end{lem}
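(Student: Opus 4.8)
The plan is to combine Lemma~\ref{lem:final_time_of_active_is_bounded} with a bound on the size of the active set and then reuse the explicit ``wait-then-travel'' construction already established in Proposition~\ref{prop:Planning-doesnt-fail}. When robot $i$ acquires the token $\Phi$ at time $t$ to handle a new task $s\rightarrow g$, its own previously registered trajectory $\pi_i$ has already reached its destination (a new task is accepted only after the old one completes), so $f(\pi_i)<t$ and hence $\pi_i\notin A(\Phi,t)$. Since $\Phi$ holds at most one trajectory per robot, the active trajectories belong to robots other than $i$, which immediately yields $\left|A(\Phi,t)\right|\leq n-1$.

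First I would apply Lemma~\ref{lem:final_time_of_active_is_bounded} to get $F(A(\Phi,t))\leq t+\left|A(\Phi,t)\right|r\leq t+(n-1)r$; that is, every other robot has reached and will thereafter remain at its destination endpoint no later than $\overline{t}:=F(A(\Phi,t))$. Next I would exhibit a witness trajectory $\pi$ exactly as in Proposition~\ref{prop:Planning-doesnt-fail}: robot $i$ stays at its start endpoint $s$ throughout $[t,\overline{t}]$, and from $\overline{t}$ onward it follows the $\mathrm{ext}_{\overline{r}}\left(E\setminus\{s,g\}\right)$-avoiding path from $s$ to $g$ whose existence is guaranteed by the validity of the infrastructure. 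By the definition of $r$ as the longest individually optimal endpoint-to-endpoint travel time, the traversal phase lasts at most $r$, so $f(\pi)\leq\overline{t}+r\leq t+(n-1)r+r=t+nr$, which is precisely the claimed bound.

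The step needing the most care is verifying that this witness is genuinely collision-free, but that verification is inherited wholesale from Proposition~\ref{prop:Planning-doesnt-fail}: waiting at $s$ is safe because no other trajectory in the (collision-free) token may occupy $s$, and after $\overline{t}$ every other robot sits on an endpoint of $E\setminus\{s,g\}$, which the chosen path avoids with the required $2\overline{r}$ clearance. Consequently a complete planner (or the time-extended roadmap search) is guaranteed to return \emph{some} trajectory no later than this witness, so the bound $f(\pi)\leq t+nr$ holds for the trajectory the algorithm actually computes.

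I expect the only genuinely new content relative to the earlier results to be the counting argument pinning $\left|A(\Phi,t)\right|\leq n-1$ and its substitution into Lemma~\ref{lem:final_time_of_active_is_bounded}; the collision-freeness and the $r$-bound on the endpoint-avoiding path are restatements of facts already used in the proofs of Proposition~\ref{prop:Planning-doesnt-fail} and Lemma~\ref{lem:final_time_of_active_is_bounded}. The split $(n-1)r+r$ is also what explains why the final bound is exactly $nr$: it is the worst-case waiting time behind the $n-1$ other robots plus one traversal of the infrastructure.
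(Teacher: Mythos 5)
Your proposal is correct and follows essentially the same route as the paper: bound $\left|A(\Phi,t)\right|\leq n-1$ (the paper does this via robot $i$ removing its own trajectory from the token, you via noting $f(\pi_i)<t$ so $\pi_i\notin A(\Phi,t)$ --- an equivalent counting step), apply Lemma~\ref{lem:final_time_of_active_is_bounded} to get $F(A(\Phi,t))\leq t+(n-1)r$, and then use the wait-until-$\overline{t}$-then-traverse witness from Proposition~\ref{prop:Planning-doesnt-fail}, whose travel phase is bounded by $r$. The extra care you take to re-verify collision-freeness of the witness is not spelled out in the paper's proof but is a faithful restatement of the argument it implicitly relies on.
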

\begin{proof}
It is known that $F(A(\Phi,t))\leq t+\left|A(\Phi,t)\right|r$. Token
$\Phi$ is updated in such a way that it contains at most one record
for each robot. Assume that robot $i$ handles a new relocation task.
Before planning, robot $i$ removes its trajectory from token $\Phi$
and thus we have $\left|\Phi\right|\leq n-1$. Since $\left|A(\Phi,t)\right|\subseteq\Phi$,
we have $\left|A(\Phi,t)\right|\leq n-1$ and using Lemma~\ref{lem:final_time_of_active_is_bounded},
we get $F(A(\Phi,t))\leq t+(n-1)r$, i.e. all other robots will be
at their respective destination endpoint at latest time $t+(n-1)r$.
In the worst case, trajectory $\pi$ can be constructed by waiting
at robot's $i$ current endpoint until time $t+(n-1)r$ and then following
the shortest path to its destination endpoint, which can take at most
$r$. Trajectory $\pi$ thus reaches the destination endpoint latest
at time $t+(n-1)r+r=t+nr$.\end{proof}
\begin{thm*}
The worst-case asymptotic complexity of a single relocation task handling
using COBRA with time-extended roadmap planning is $O(n^{2}v^{2}(\frac{1}{\delta t})^{2}r(d+r))$,
where $n$ is the number of robots in the system, $v$ is the number
of vertices in the roadmap graph, $\delta t$ is the time-discretization
step, $r$ is the maximum duration of a single relocation when the
interactions between robot are ignored, and $d$ is the duration of
longest space-time edge in time-extended roadmap.\end{thm*}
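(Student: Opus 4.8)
The plan is to bound the total work of handling one task by first confining the infinite time-extended roadmap to the finite region the search can ever touch, and then multiplying the number of graph elements by the per-element cost. The key enabler is Lemma~\ref{lem:Best_response_trajectory_duration_is_limited}: when a task is handled at time $t$, some feasible trajectory reaches the destination by time $t+nr$. Since the planner returns a minimal-time trajectory and Dijkstra finalizes vertices in order of non-decreasing arrival time over strictly positive edge durations, the goal vertex is settled with arrival time at most $t+nr$; hence the search need only ever touch vertices of relative time at most $nr$. As vertex times are multiples of $\delta t$, this confines the search to at most $nr/\delta t$ time layers.

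I would next count the touched subgraph. With $v=|V|$ spatial vertices and at most one time-extended copy of each per layer, $|\overline V|\le v\cdot nr/\delta t$. Since $|L|\le v^2$ and each spatial edge spawns at most one time-extended edge per source layer (the arrival time $t'$ being determined by the source time and the edge), $|\overline L|\le v^2\cdot nr/\delta t$.

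The two dominant costs are then charged separately. First, the construction cost: a candidate time-extended edge enters $\overline L$ only after the collision test $\mathrm{line}((v_1,t),(v_2,t'))\cap\Delta=\emptyset$. Its space-time span has duration at most $d$, and because every trajectory in $\Delta$ is piecewise linear with pieces of duration at least $\delta t$, each of the at most $n$ obstacle trajectories contributes only $O(d/\delta t)$ segments overlapping that window, each tested in $O(1)$. One edge therefore costs $O(n\,d/\delta t)$, and admitting all edges costs $O(|\overline L|\cdot n\,d/\delta t)=O(n^2 v^2 (1/\delta t)^2 r d)$. Second, the search cost: with a simple array-based priority queue Dijkstra runs in $O(|\overline V|^2)=O(n^2 v^2 (1/\delta t)^2 r^2)$, which already dominates the $O(|\overline L|)$ relaxation work. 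Summing the two bounds yields $O(n^2 v^2 (1/\delta t)^2 r(d+r))$, the claimed complexity.

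The main obstacle is the first step: justifying that the search may be capped at relative time $nr$ without losing the optimum. This rests entirely on Lemma~\ref{lem:Best_response_trajectory_duration_is_limited}, which guarantees a feasible trajectory of arrival time at most $t+nr$, together with the Dijkstra invariant that on positive-weight graphs the goal is finalized before any strictly-later vertex is settled; hence no vertex beyond the horizon can lie on the returned optimal trajectory, and the planner may refrain from ever creating such vertices so that $|\overline V|\le v\cdot nr/\delta t$ holds cleanly. The remaining steps---the element counts above and the $O(d/\delta t)$ bound on the number of obstacle segments meeting one edge's time window---are routine, the latter depending only on the minimum edge duration being $\delta t$.
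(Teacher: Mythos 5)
Your proposal is correct and follows essentially the same route as the paper: bound the reachable time horizon by $t+nr$ via Lemma~\ref{lem:Best_response_trajectory_duration_is_limited}, count at most $nrv/\delta t$ vertices and $nrv^{2}/\delta t$ edges in the truncated time-extended graph, charge $O(nd/\delta t)$ per edge for collision checking and $O(N^{2})$ for Dijkstra, and sum the two terms. Your added justification of why the horizon can be capped (the Dijkstra settling-order invariant on positive-duration edges) is a point the paper leaves implicit, but it does not change the argument.
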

\begin{proof}
Suppose that robot $i$ handles a relocation task $s\rightarrow g$.
Let $v$ denote the number of vertices of the roadmap graph. In the
worst case, the time-extended graph can contain all vertices from
the roadmap $G$ for each time step. The best-response trajectory
is the shortest path from the initial vertex $(s,t_{s})$, where $t_{s}=t+t_{\mathrm{planning}}$
to a goal vertex $(g,t_{g})$. The search algorithm will only need
to examine the subgraph for the time interval $[t_{s},t_{g}]$, which
contains at most $\left\lceil \frac{(t_{g}-t_{s})}{\delta t}\right\rceil v$
vertices. We know $t_{g}\leq t+nr$ (from Lemma~\ref{lem:Best_response_trajectory_duration_is_limited})
and $t_{s}>t$ (because $t_{s}=t+t_{\mathrm{planning}}$ and $t_{\mathrm{planning}}>0$)
and thus this subgraph will have at most $\left\lceil \frac{nr}{\delta t}\right\rceil v\overset{\sim}{=}\frac{nrv}{\delta t}$
vertices for $\delta t\ll nr$. This graph first needs to be constructed
and then searched. Construction: During the construction of the space-time
subgraph for robot $i$, each edge $\epsilon$ has to be checked for
collisions with moving obstacles $\Delta$ composed of $n$ space-time
regions, each representing the disc body of another robot $j$ moving
along trajectory $\pi_{j}$ (itself composed of line segments). Deciding
whether $\epsilon$ collides with $R_{j}(\pi_{j})$ can be done in
time linear to the number of time steps edge $\epsilon$ spans, since
for each time step $\tau$, a sub-segment corresponding to time step
$\tau$ can be extracted both from $\epsilon$ and $\pi_{j}$ and
the collision-free property $\forall t:\:\epsilon(t)-\pi_{j}(t)\leq r_{j}+r_{i}$
can be validated by solving the corresponding quadratic equation.
One edge can be checked in time $O(\frac{d}{\delta t}n)$, where $d$
is the duration of the edge. There is at most $\frac{nr}{\delta t}v^{2}$
edges in the space-time subgraph and thus it can be constructed in
time $O(n^{2}v^{2}(\frac{1}{\delta t})^{2}dr)$. Search: The worst-case
time-complexity of Dijkstra's shortest path algorithm is $O(N^{2})$~,
where $N$ is the number vertices of the searched graph, which is
in our case $N=\frac{nrv}{\delta t}$. The time-complexity of search
is therefore $O\left(\left(\frac{nrv}{\delta t}\right)^{2}\right)=O(n^{2}v^{2}(\frac{1}{\delta t})^{2}r^{2})$.
By combining construction and search, we get $O(n^{2}v^{2}(\frac{1}{\delta t})^{2}r(d+r))$. 
\end{proof}

\section{Empirical Analysis}

\begin{figure}[t]
\begin{centering}
\includegraphics[width=0.75\columnwidth]{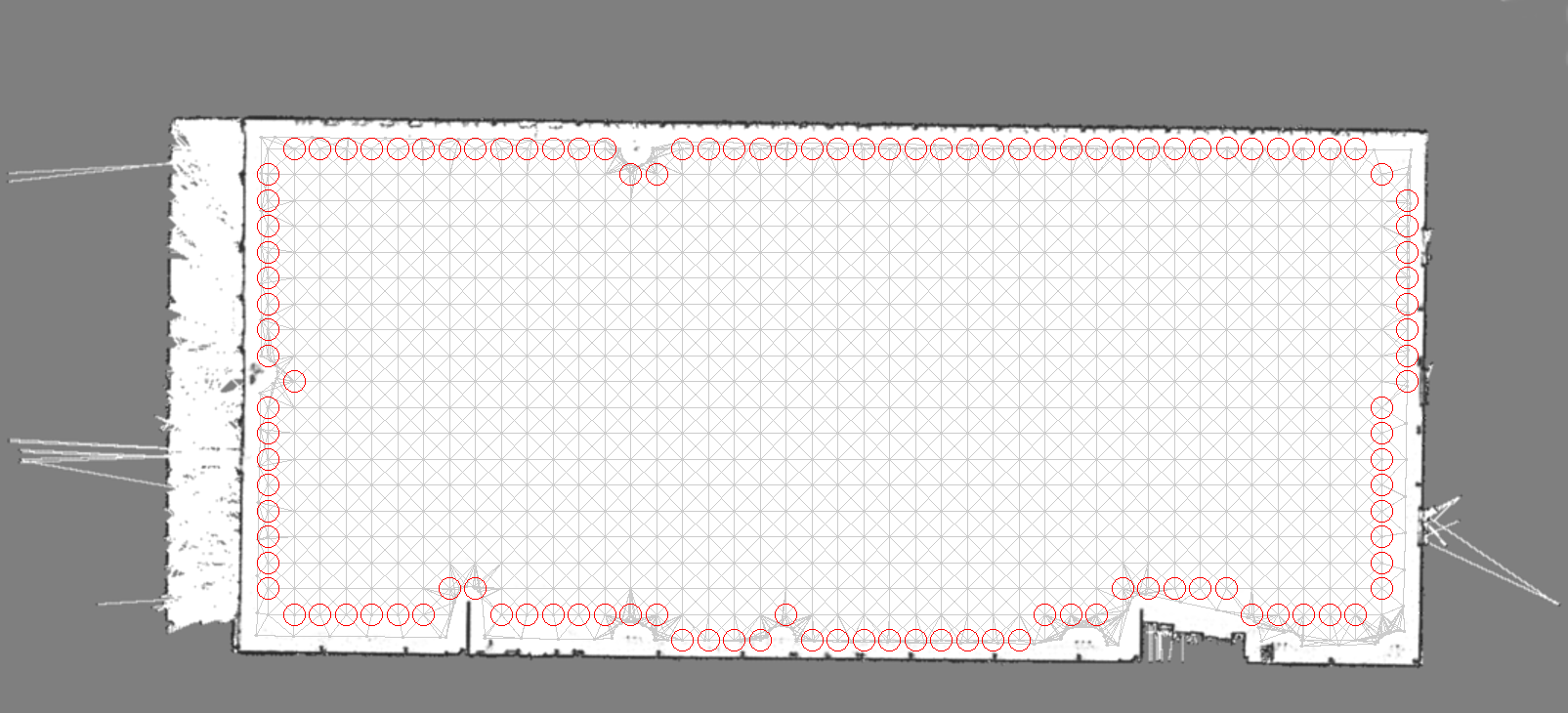}
\par\end{centering}

\begin{centering}
Hall
\par\end{centering}

\medskip{}

\begin{centering}
\includegraphics[width=0.75\columnwidth]{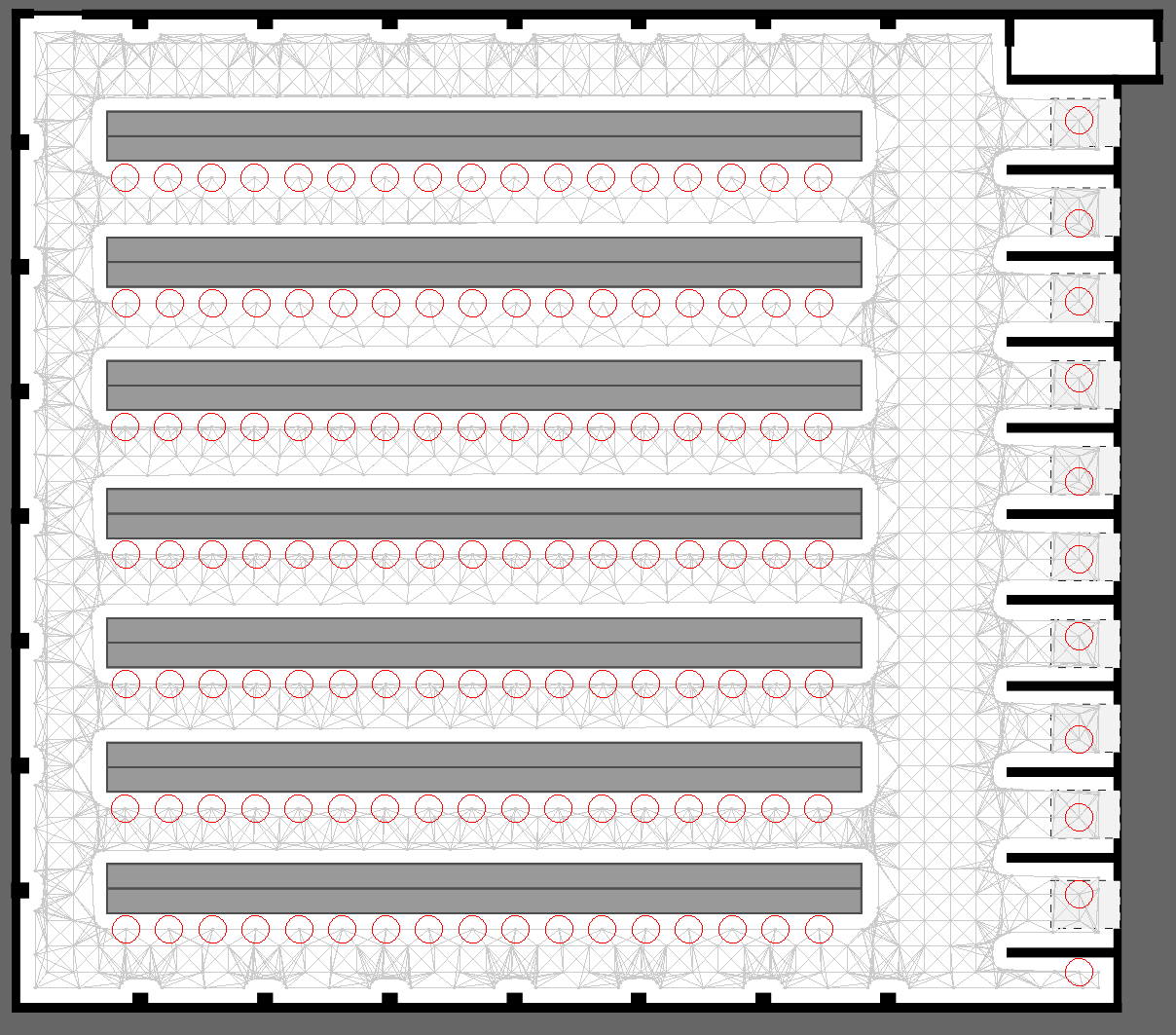}
\par\end{centering}

\begin{centering}
Warehouse
\par\end{centering}

\medskip{}

\begin{centering}
\includegraphics[width=1\columnwidth]{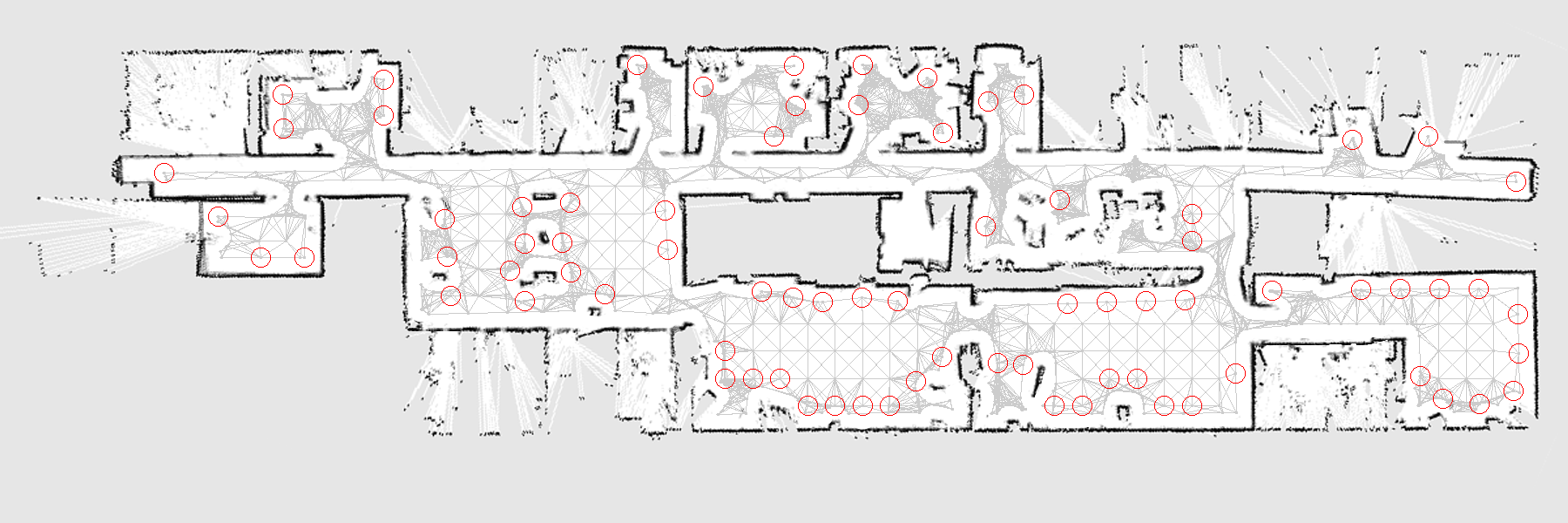}
\par\end{centering}

\begin{centering}
Office
\par\end{centering}

\medskip{}

\caption{\label{fig:Test-environments}Test environments. The infrastructure
endpoints depicted in red, the roadmap graph shown in gray. }
\end{figure}
\begin{figure*}[t]
\begin{centering}
\begin{tabular}{ccc}
Hall & Warehouse & Office\tabularnewline
\includegraphics[width=0.25\paperwidth]{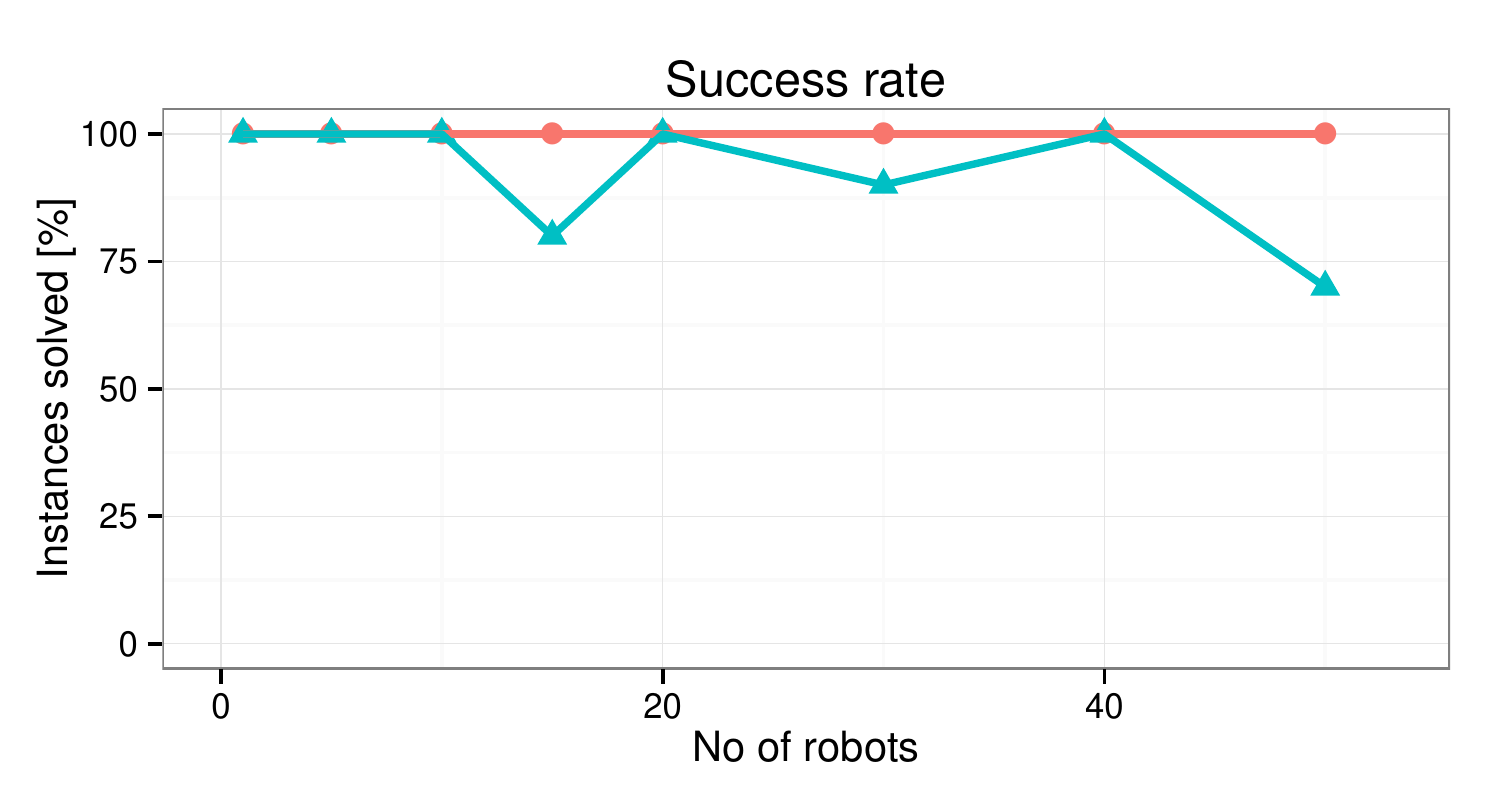} & \includegraphics[width=0.27\paperwidth]{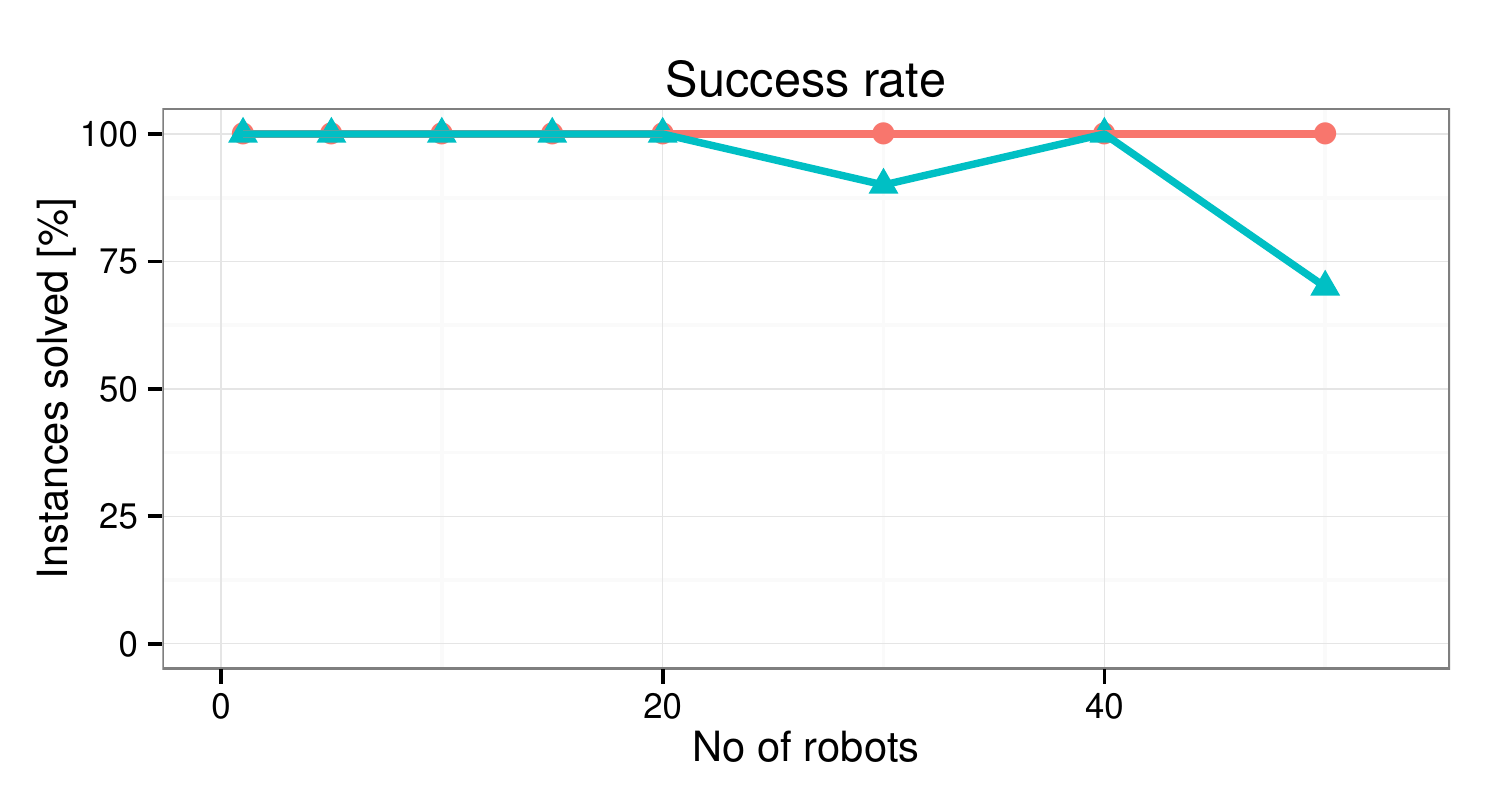} & \includegraphics[width=0.27\paperwidth]{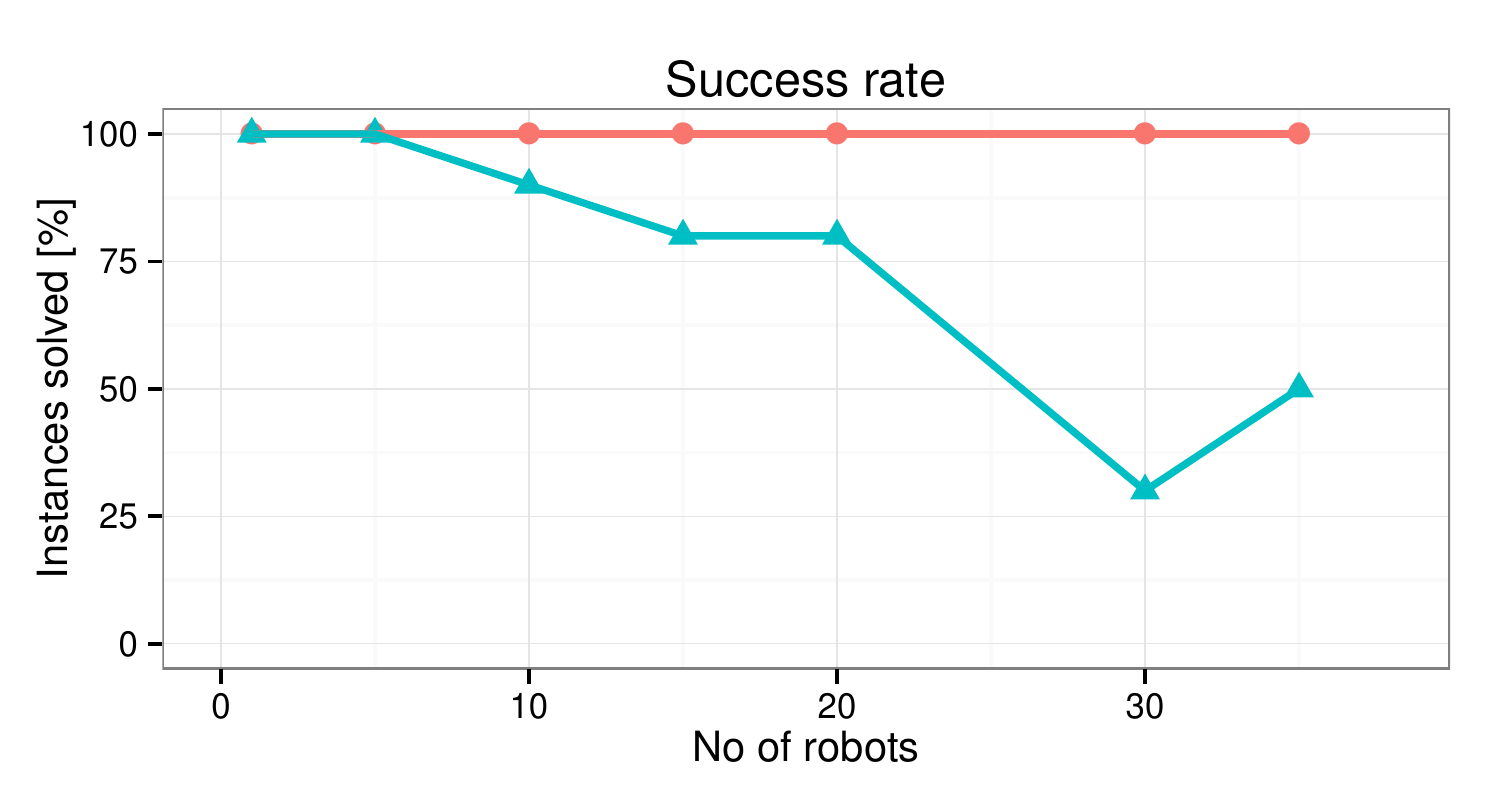}\tabularnewline
\includegraphics[width=0.25\paperwidth]{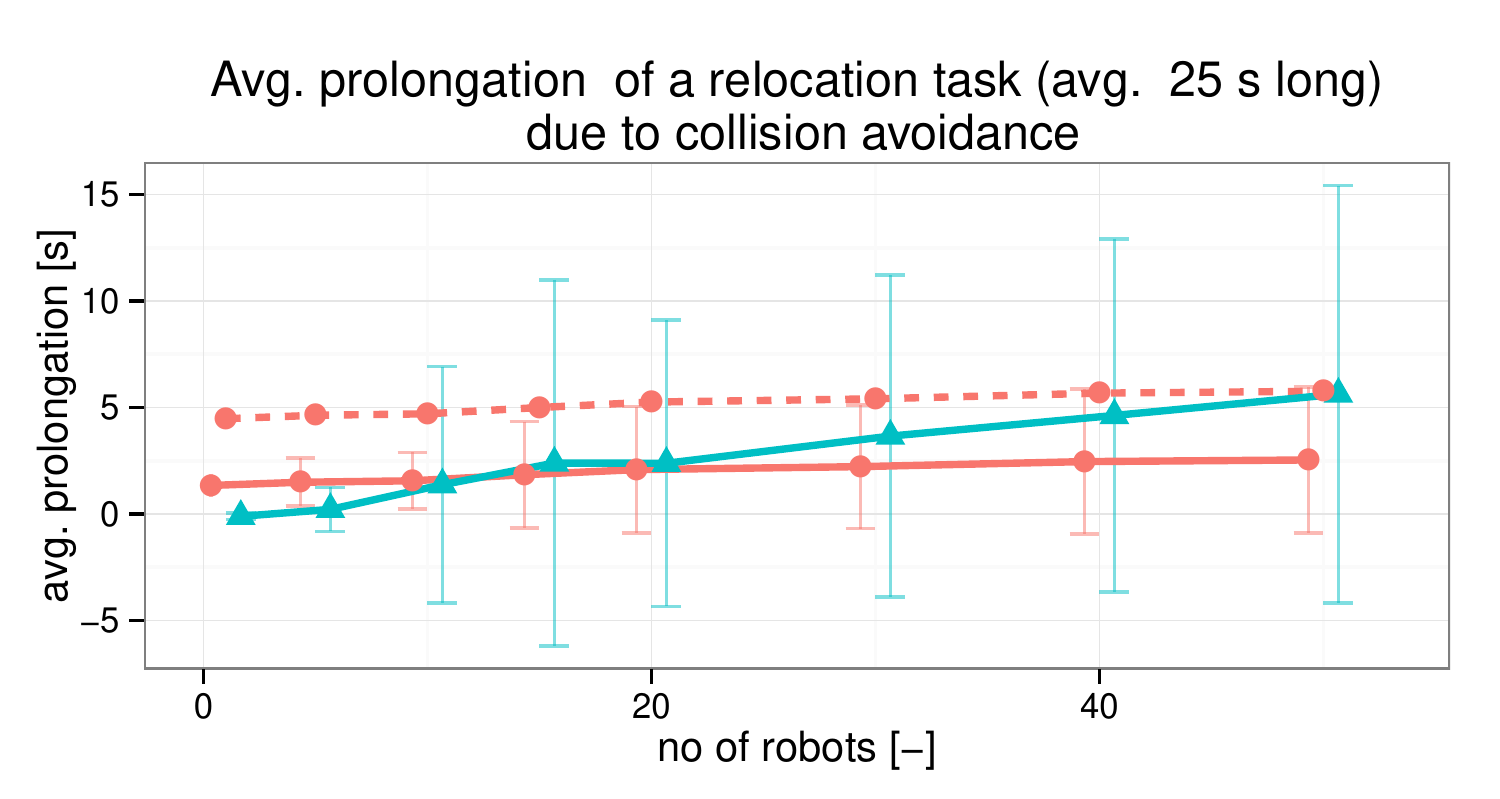} & \includegraphics[width=0.27\paperwidth]{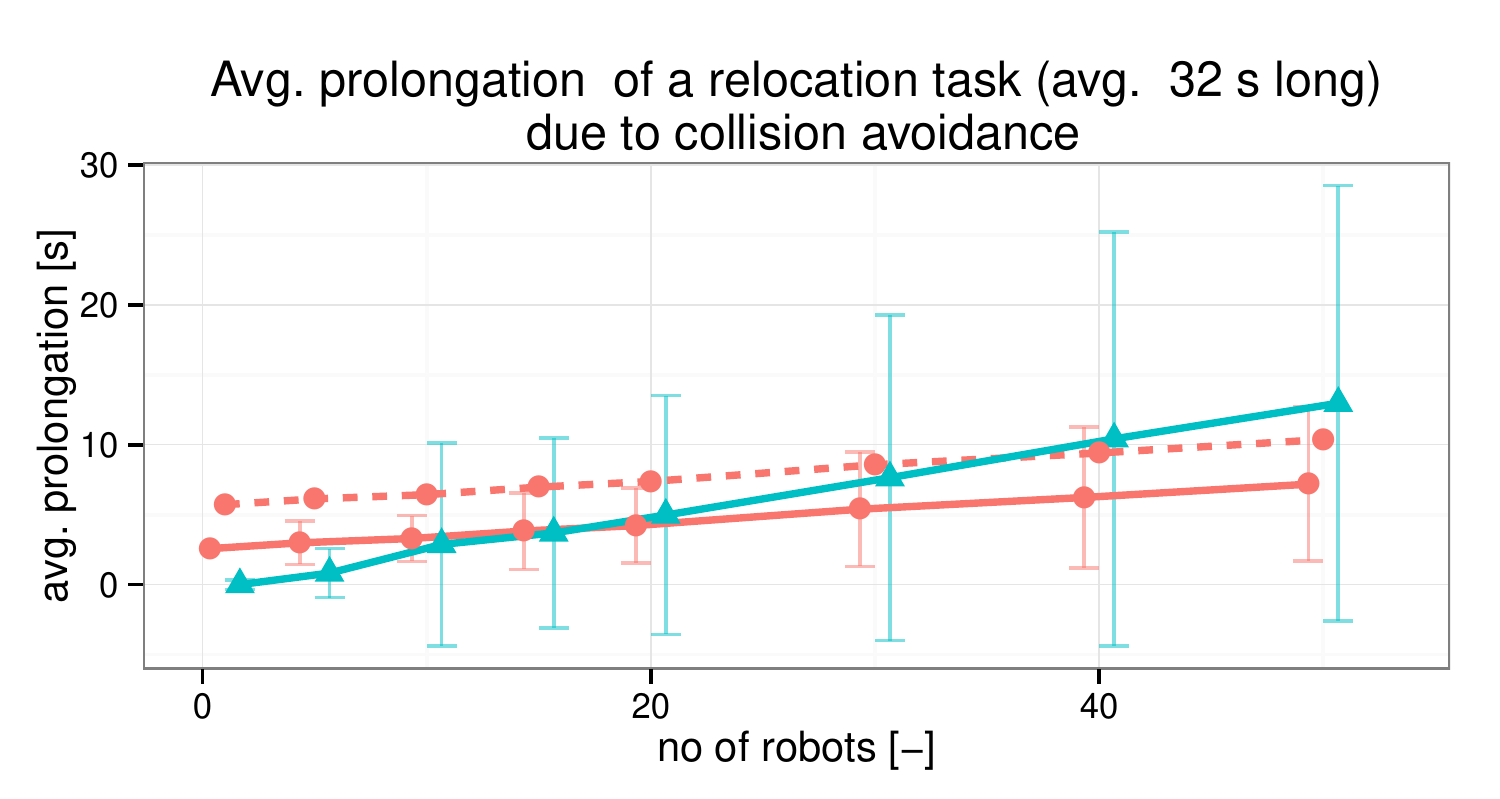} & \includegraphics[width=0.27\paperwidth]{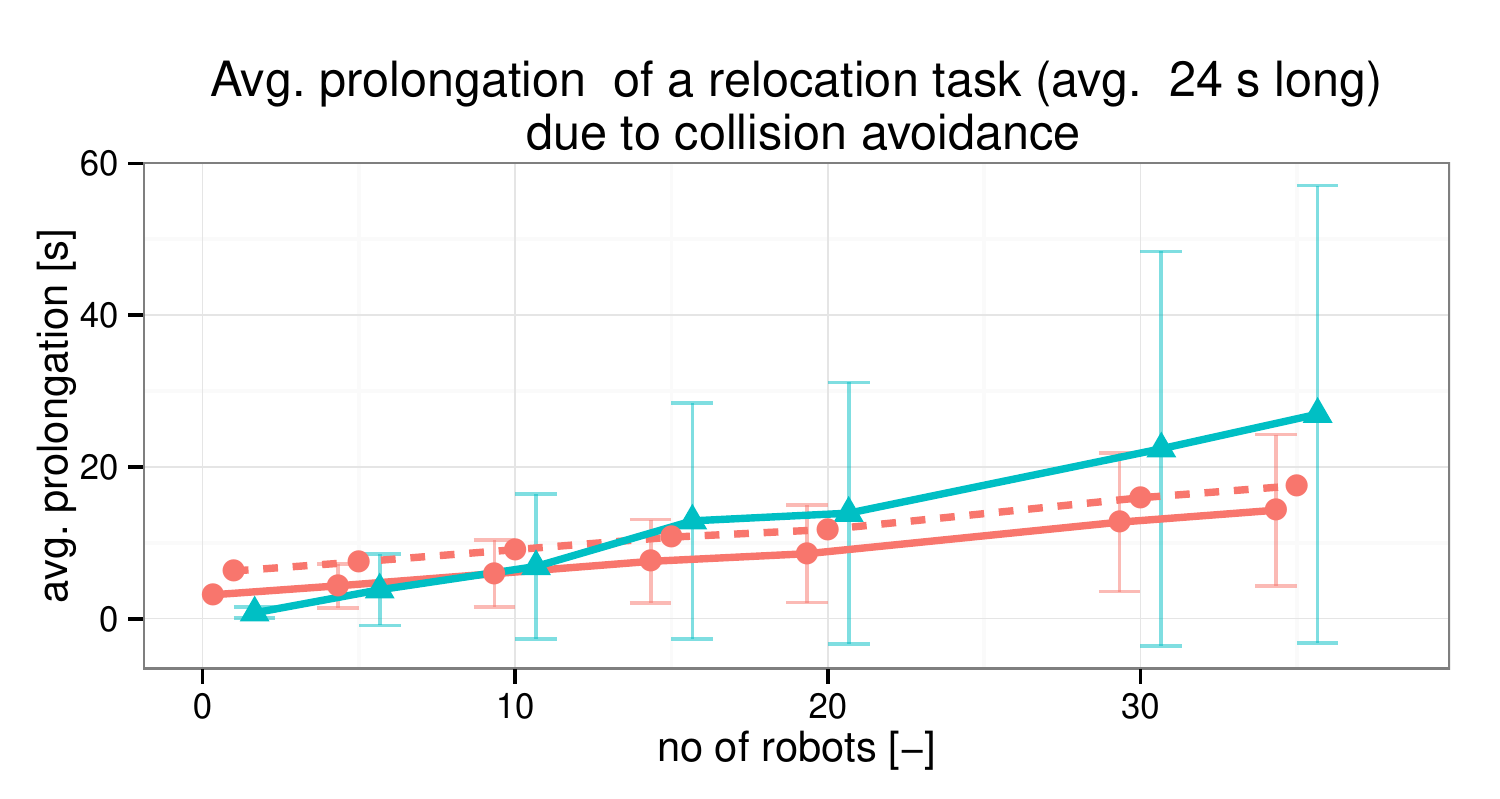}\tabularnewline
\multicolumn{3}{c}{\includegraphics[width=8cm]{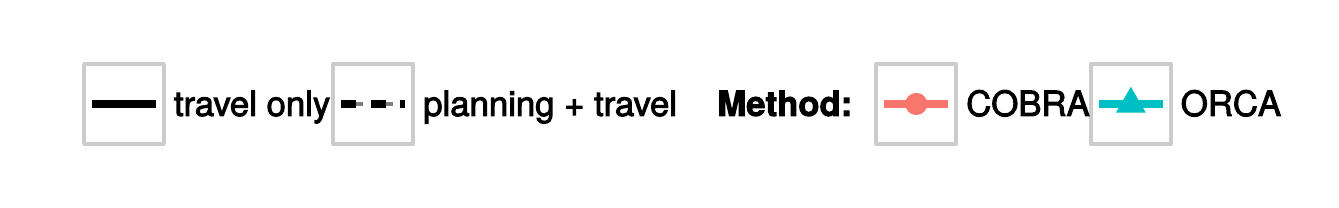}}\tabularnewline
\end{tabular}
\par\end{centering}

\caption{\label{fig:Results}Results. The bars represent standard deviation. }
\end{figure*}

In this section we compare the performance of COBRA and ORCA, a state-of-the
art decentralized approach for on-line collision avoidance in large
multi-robot teams. The two algorithms are compared in three real-world
environments shown in Figure~\ref{fig:Test-environments}. The test
environments are valid infrastructures. The execution of a multi-robot
system is simulated using a multi-robot simulator%
\footnote{The simulator and the test instances are available for download at
http://github.com/mcapino/cobra-icaps2015.%
}. During each simulation, the given number of robots is created and
each of them is successively assigned $4$ relocation tasks to a randomly
chosen unassigned endpoint. When the simulation is started, all the
robots are initialized and the first relocation task with random destination
endpoint is issued. To avoid the initial unnatural situation in which
all robots would need to plan simultaneously, the initial task is
issued with a random delay within the interval {[}0,~30\,s{]}. Once
the robot reaches the destination of its task, a new random destination
is generated and the process is repeated until the required number
of relocation tasks have been generated. For each such simulation,
we observe whether all robots successfully carried out all assigned
tasks and the time needed to reach the destination of each relocation
task. Further, we compute the prolongation introduced by collision
avoidance as $p=t^{A}-t'$, where $t^{A}$ is a duration of a particular
task when an algorithm $A$ is used for trajectory coordination, and
$t'$ is the time the robot needs to reach the destination without
collision avoidance simply by following the shortest path at the roadmap
at maximum speed. 

The robots are modeled as circular bodies with radius $r$=50\,cm
that can travel at maximum speed $v_{\mathrm{max}}$=1\,m/s. The
relative size of a robot and the environment in which the robots operate
is depicted in Figure~\ref{fig:Test-environments}, where the red
circles indicate the size of a robot. The roadmaps are constructed
as 8-connected grid with additional vertices and edges added near
the walls to maintain connectivity in narrow passages. The non-diagonal
edges of the base grid are 130\,cm long, the diagonal edges are 183\,cm
long.

The planning window used by COBRA is $t_{\mathrm{planning}}=\text{3\,\ s}$
long, yet on average single planning required only around 0.7\,s
even on the most challenging instances. The time-extended roadmap
uses discretization $\delta t$=650\, that conveniently splits travel
on the non--diagonal edges into 2 time steps and diagonal edges into
3 time steps.

The reactive technique ORCA~\cite{vanBerg2011_ORCA} is a control-engineering
approach typically used in a closed-loop such that at each time instant
it selects a collision-avoiding velocity vector from the continuous
space of robot's velocities that is the closest to the robot's desired
velocity. In our implementation, at each time instant the algorithm
computes a shortest path from the robots current position to its goal
on the same roadmap graph as is used for trajectory planning by COBRA.
The desired velocity vector then points at this shortest path at the
maximum speed. When using ORCA, we often witnessed dead-lock situations
during which the robots either moved at extremely slow velocities
or even stopped completely. If any of the robots did not reach its
destination in the runtime limit of 10\,mins (avg. task duration
is less than 33\,s), we considered the run as failed.

\subsubsection*{Results}

Figure~\ref{fig:Results} shows the performance of COBRA and ORCA
in the three test environments. The top row of plots shows the success
rate of each algorithm on instances with increasing number of robots.
In accordance with our theoretical results, we can see that the COBRA
algorithm reliably leads all robots to their assigned destinations
without collisions. When we tried to realize collision-free operation
using ORCA, the algorithm led in some cases the robots into a dead-lock.
The problem was exhibited more often in environments with narrow passages
as we can see in the success rate plot for the Office environment.

The bottom row of plots shows the average prolongation of a relocation
task when either COBRA and ORCA is used for collision avoidance. The
total prolongation introduced by COBRA is composed of two components:
planning time and travel time. When a new task is used, the robot
waits for $t_{\mathrm{planning}}=3\textrm{\,\ s}$ in order to plan
a collision-free trajectory to the destination of its new task. Only
then, the found trajectory is traveled by the robot until the desired
destination is reached. The robots controlled by ORCA start moving
immediately because they follow a precomputed policy towards the destination
of the current task or a collision-avoiding velocity if a possible
collision is detected. Recall that ORCA performs optimization in the
continuous space of robot's instantaneous velocities, whereas COBRA
plans a global trajectory on a roadmap graph with a discretized time
dimension. In the case of simple conflicts, ORCA can take advantage
of its ability to optimize in the continuous space and generates motions
where the robots closely pass each other, whereas COBRA has to stick
to the underlying discretization, which does not always allow such
close evasions. However, when the conflicts become more intricate,
the advantages of global planning starts to outweigh the potential
loss introduced by space-time discretization. The exact influence
of planning in a discretized space-time can be best observed by looking
at the data point for instances with 1 robot -- these instances do
not involve any collision avoidance and thus the prolongation can
be attributed purely to the discretization of space-time in which
the robot plans. Further, we can observe that the local collision
avoidance is less predictable, which is exhibited by the larger standard
deviation.

\section{Conclusion}

We proposed a novel method for on-line multi-robot trajectory planning
called Continuous Best-response Approach (COBRA) and both formally
and experimentally analyzed its properties. We have shown that the
algorithm has a unique set of features -- its time complexity is low
polynomial (quadratic in the number of robots) and yet it achieves
completeness in a class of environments called valid infrastructures
that encompass most human-made environments that have been intuitively
designed for efficient transport. Further, our technique is directly
applicable to systems with dynamically issued task and can be implemented
in a decentralized fashion on heterogeneous robots.  We experimentally
compared COBRA with a popular reactive technique ORCA in three real-world
maps using simulation. The results show that COBRA is more reliable
than ORCA and in more challenging scenarios, the planning approach
generates trajectories that are up to 48\,\% faster than ORCA.

\section{Acknowledgements}

This work was supported by the Grant Agency of the Czech Technical
University in Prague, grant No. SGS13/143/OHK3/2T/13 and by the Ministry
of Education, Youth and Sports of Czech Republic within the grant
no. LD12044.

\bibliographystyle{aaai}
\bibliography{bib}

\end{document}